\documentclass[12pt,letterpaper,fleqn,oneside]{article}

\usepackage{sectsty}
\usepackage{caption}
\usepackage{times}
\usepackage{amssymb,amsfonts,amsmath,amscd}
\usepackage[pdftex,colorlinks]{hyperref}
\usepackage[pdftex]{graphicx}
\usepackage{bm}
\usepackage{mathtools}
\usepackage{array}
\usepackage{booktabs}
\usepackage{verbatim}
\usepackage{fancyhdr}
\usepackage{pdfpages}
\usepackage{bm}
\usepackage{amsthm}
\newtheorem{theorem}{Theorem}
\newtheorem{lemma}{Lemma}

\usepackage[]{hyperref}
\hypersetup{colorlinks=false,}

\makeatletter
\renewcommand*\env@matrix[1][\arraystretch]{%
  \edef\arraystretch{#1}%
  \hskip -\arraycolsep
  \let\@ifnextchar\new@ifnextchar
  \array{*\c@MaxMatrixCols c}}
\makeatother

\usepackage{tikz}
\usepackage{tikz-3dplot}

\usepackage{pgfplots}
\usepackage{filecontents}
\usepackage[font=footnotesize,labelfont=bf]{caption}

\vfuzz2pt 
\hfuzz2pt 

\addtolength{\hoffset}{-1.0in} \addtolength{\voffset}{-0.75in}
\setlength{\textwidth}{7in} \setlength{\textheight}{8.25in}
\setlength{\headheight}{0.6in}
\setlength{\headsep}{0.4in}

\newcommand{\UTIASprogram}{N/A}
\newcommand{\UTIAStitle}{On Observability and Identifiability of Tightly-coupled Ultrawideband-aided Inertial Localization}
\newcommand{\UTIASdocument}{TR}
\newcommand{\UTIASrevision}{Rev: 1.0}
\newcommand{\UTIASauthor}{Author}


\hypersetup{%
    pdftitle={\UTIASdocument: \UTIAStitle},
    pdfauthor={\UTIASauthor},
    pdfkeywords={},
    pdfsubject={\UTIASprogram},
    pdfstartview={},
    urlcolor=cyan,
    linkcolor=red,
}%


\title{\sf\bfseries \UTIAStitle }

\author{Abhishek Goudar \\
       Institute for Aerospace Studies \\
       University of Toronto\\
       \texttt{<abhishek.goudar@robotics.utias.utoronto.ca>} \\
       Supervisor: Dr. ~A.~P.~Schoellig
       }
\date{}

\begin{document}

\fancypagestyle{plain}{%
    \fancyhf{}%
    \fancyfoot[C]{}%
    \fancyhead[R]{\begin{tabular}[b]{r}\small\sf \UTIASdocument\\
        \small\sf\UTIASrevision\\
        \small\sf\today \end{tabular}}%
    \fancyhead[L]{\includegraphics[height=0.45in]%
        {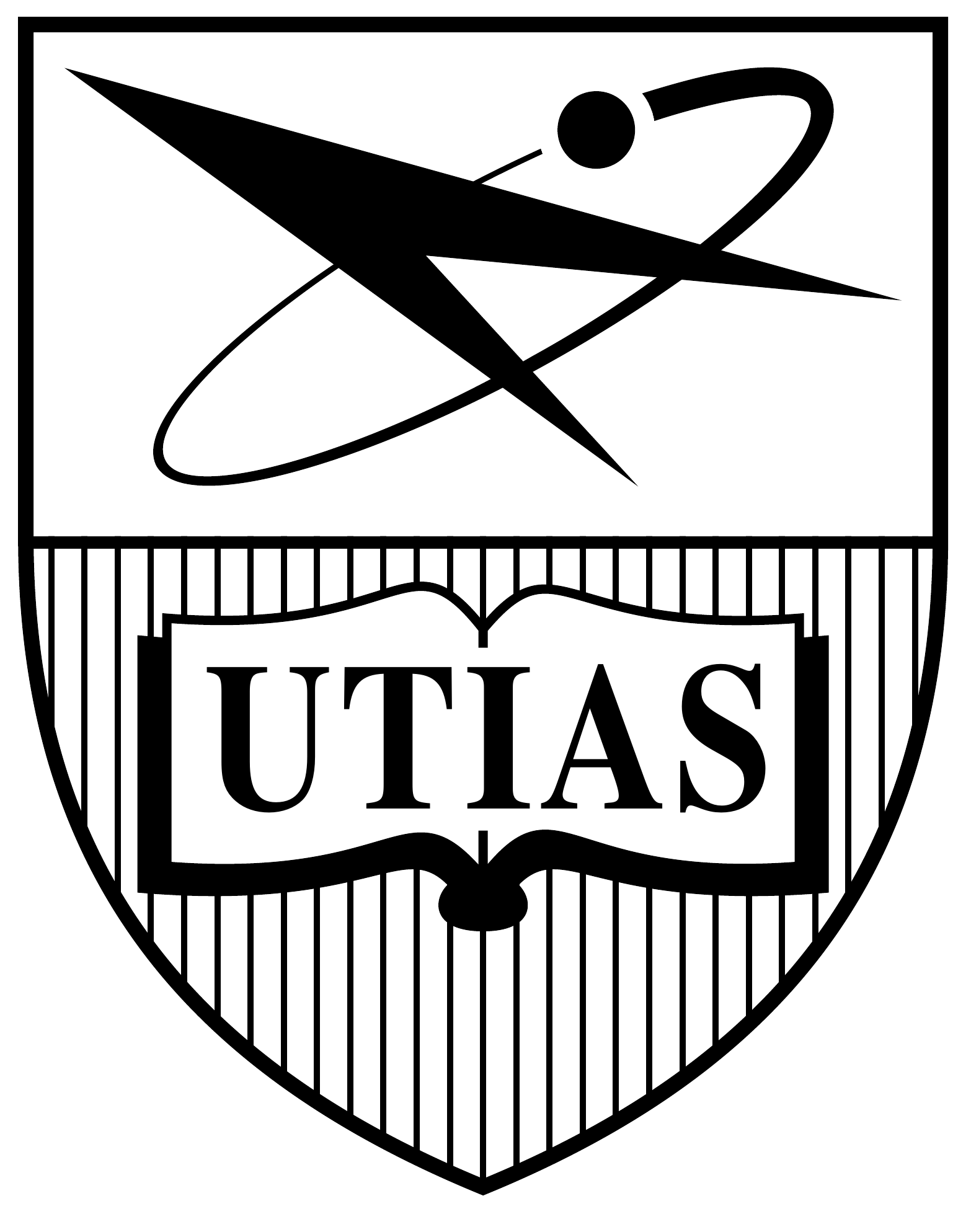} \begin{tabular}[b]{l}\sc{University of Toronto} \\ \sc{Institute for Aerospace Studies} \\ \mbox{} \end{tabular}}%
    \renewcommand{\headrulewidth}{0pt}
    \renewcommand{\footrulewidth}{0pt}}

\pagestyle{fancy}

\allsectionsfont{\sf\bfseries}

\renewcommand{\captionlabelfont}{\sf\bfseries}


\lhead{ \includegraphics[height=0.45in]%
        {figs/utias.pdf} \begin{tabular}[b]{l}\sc{University of Toronto} \\ \sc{Institute for Aerospace Studies} \\ \mbox{} \end{tabular}}
\rhead{ \begin{tabular}[b]{r}\small\sf \UTIASdocument\\
        \small\sf\UTIASrevision\\
        \small\sf\today \end{tabular}}
\chead{}
\lfoot{}
\cfoot{\thepage}
\rfoot{}

\renewcommand{\headrulewidth}{0pt}
\renewcommand{\footrulewidth}{0pt}


\maketitle%

\begin{abstract}
The combination of ultrawideband (UWB) radios and inertial measurement units (IMU) can provide accurate positioning. To ensure reliable communication, the radios are generally mounted at the extremities of a mobile system whereas the IMUs are located closer to the center of gravity for use in control, resulting in a spatial offset between the IMU and the UWB radio. Additionally, data from heterogeneous sensors can arrive at different time instants. The systematic fusion of data from multiple sources requires the temporal offset and spatial offset between the sensors to be known. 

An important aspect of calibration is the observability of the system state and identifiability of the system parameters. Estimating the state or parameters of a system that is otherwise unobservable or unidentifiable, can result in poor estimates. In this report, the local weak observability of the state and the identifiability of the temporal offset for a tightly-coupled UWB-aided inertial localization system is studied.
\end{abstract}

\newpage
\tableofcontents

\newpage
\section{Introduction}
A typical UWB-based positioning system consists of UWB radios, known as \textit{anchors}, installed in the surrounding environment. A mobile system equipped with a UWB radio calculates its position by measuring the time of flight between its UWB radio and the anchors. Generally, IMUs and UWB radios are not co-located and there is a spatial offset between the two sensors, also referred to as \textit{sensor extrinsic} parameters. This spatial offset can affect positioning accuracy.

Additionally, data from heterogeneous sensors are generally timestamped with different sources of clocks which results in a \emph{temporal} offset. Most sensor fusion schemes require data from different sensors to have timestamps with respect to a single clock. This is generally done through hardware synchronization by using a common clock signal for multiple sensors. The next best choice is software synchronization, with a server running on a destination computer and a client on the sensor hardware. However, many off-the-shelf components do not support either of these methods. Estimating the state of a mobile system without compensating for the sensor extrinsic parameters or the temporal offset can result in poor positioning accuracy, particularly for large offsets.

An important aspect of calibration is the ability to unambiguously recover the relevant system states given system outputs. This can be accomplished using \emph{observability} and \emph{identifiability} analysis. In this report, we analyze the observability of the core state, including the sensor extrinsic parameters and the identifiability of the temporal offset parameter.

\begin{figure}[h]
	\centering
	\tdplotsetmaincoords{70}{110}
	\begin{tikzpicture}[scale=1.3,tdplot_main_coords] 
		\coordinate (WORLD_ORIGIN) at (0,0,0);
		\coordinate (UWB_ORIGIN) at (3,4.05,3.5);
		\coordinate (IMU_ORIGIN) at (3,6.0,4);
		\coordinate (ANCHOR_ORIGIN) at (0,1,4);
		\coordinate (RADIO_ORIGIN) at (0,1,3.85);
		\coordinate (DRONE_ORIGIN) at (3,6,3.6);

		\node [anchor=north] at (WORLD_ORIGIN){$\{W\}$};
		\node [anchor=north] at (IMU_ORIGIN){$\{I\}$};
		\node [anchor=north east] at (UWB_ORIGIN){$\{U\}$};
		\node [anchor=north east] at (ANCHOR_ORIGIN){$\{A_j\}$};

		\draw[solid] (UWB_ORIGIN) -- (IMU_ORIGIN) node[above, pos=0.45]{$\textbf{p}^I_U$};
		\draw[dashed] (WORLD_ORIGIN) -- (IMU_ORIGIN) node[below right, pos=0.4]{$\{\textbf{p}^W_I, \textbf{q}^W_I\}$};
		\draw[dashed] (WORLD_ORIGIN) -- (ANCHOR_ORIGIN) node[above, pos=0.5]{$\textbf{p}^W_j$};
		\draw[dotted] (ANCHOR_ORIGIN) -- (UWB_ORIGIN) node[above right, pos=0.45]{$r_j$};

		\draw[thick,->] (WORLD_ORIGIN) -- (0,1,0) node[anchor=north west]{$x$};
		\draw[thick,->] (WORLD_ORIGIN) -- (-1.5,0,0) node[anchor=south west]{$y$};
		\draw[thick,->] (WORLD_ORIGIN) -- (0,0,1) node[anchor=south]{$z$};

		\tdplotsetrotatedcoords{0}{0}{0}
		\tdplotsetrotatedcoordsorigin{(UWB_ORIGIN)}\draw[thick,color=black,tdplot_rotated_coords,->] (0,0,0) --(-0.7,0,0) node[anchor=south]{$x$};
		\draw[thick,color=black,tdplot_rotated_coords,->] (0,0,0) --(0,-0.5,0) node[anchor=south]{$y$};
		\draw[thick,color=black,tdplot_rotated_coords,->] (0,0,0) --(0,0,0.5) node[anchor=south]{$z$};

		\tdplotsetrotatedcoords{0}{0}{0}
		\tdplotsetrotatedcoordsorigin{(IMU_ORIGIN)}\draw[thick,color=black,tdplot_rotated_coords,->] (0,0,0) --(-.7,0,0) node[anchor=south]{$x$};
		\draw[thick,color=black,tdplot_rotated_coords,->] (0,0,0) --(0,-0.5,0) node[anchor=south]{$y$};
		\draw[thick,color=black,tdplot_rotated_coords,->] (0,0,0) --(0,0,0.5) node[anchor=south]{$z$};

		\tdplotsetrotatedcoords{0}{0}{0}
		\tdplotsetrotatedcoordsorigin{(ANCHOR_ORIGIN)}\draw[thick,color=black,tdplot_rotated_coords,->] (0,0,0) --(-.7,0,0) node[anchor=south]{$x$};
		\draw[thick,color=black,tdplot_rotated_coords,->] (0,0,0) --(0,-0.5,0) node[anchor=south]{$y$};
		\draw[thick,color=black,tdplot_rotated_coords,->] (0,0,0) --(0,0,0.5) node[anchor=south]{$z$};
		
		\node[inner sep=0pt, opacity=0.1] (drone) at (DRONE_ORIGIN) {\includegraphics[width=.35\textwidth]{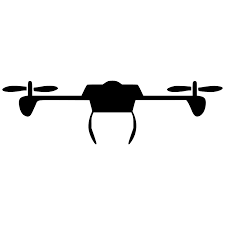}};
		\node[inner sep=0pt, opacity=0.7] (mobradio) at (UWB_ORIGIN) {\includegraphics[width=.01\textwidth]{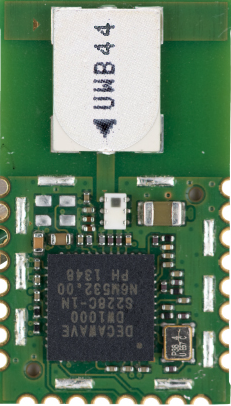}};
		\node[inner sep=0pt, opacity=0.3] (anchor) at (RADIO_ORIGIN) {\includegraphics[width=.02\textwidth]{figs/dw1000.png}};
	\end{tikzpicture}
	\caption{The relationship between different frames involved in calibration of sensor extrinsics. Frame $\{W\}$ corresponds to a gravity-aligned world reference frame. IMU and mobile radio reference frames are represented by $\{I\}$ and $\{U\}$, respectively. The offset of mobile radio in the IMU frame is $\textbf{p}_U^I$. The frame affixed to the phase center of $j^{th}$ anchor is $\{A\}_j$ and its position in world reference frame is $\textbf{p}^W_j$. The pose of the IMU in world frame is {$\{\textbf{p}^W_I, \textbf{q}^W_I\}$}.}
	\label{fig:frame_setup}
\end{figure}

\section{System Modelling}
We define the following coordinate frames for the setup in Fig. \ref{fig:frame_setup}:
\begin{enumerate}
\item \textbf{world frame} $\{W\}$, a local absolute reference frame, in which the pose of the IMU and the positions of individual anchors are expressed.
\item \textbf{mobile radio frame} $\{U\}$, a frame affixed to the phase center of the mobile radio antenna. The phase center is the point on the antenna from which the electromagnetic radiation spreads spherically outward.
\item \textbf{IMU frame} $\{ I \}$, a frame corresponding to the IMU body center, in which the body accelerations and angular velocities are measured.
\item \textbf{anchor frame} $\{A\}_j$, a frame affixed to the phase center of the $j^{th}$ anchor.
\end{enumerate}

The relationship between the different frames of reference is shown in Fig. \ref{fig:frame_setup}. Next, we describe the system parameterization, sensor models, motion model, and observation model.

\subsection{System parameterization}
The system is parameterized by the following 20 dimensional state vector:
\begin{equation}
\mathbf{x}(t) = (\mathbf{p}^W_I(t), \mathbf{v}^W_I(t), \mathbf{q}^W_I(t), \mathbf{b}_a(t), \mathbf{b}_\omega(t), \mathbf{p}^I_U(t), t_d), \label{eqn:state} 
\end{equation}
\noindent where, $\{ \mathbf{p}^W_I(t), \mathbf{v} ^W_I(t), \mathbf{q}^W_I(t) \}$ denote the position, translational velocity and orientation of the IMU frame with respect to the world frame. A unit quaternion parameterization is used for representing orientations. We use quaternions for their algebraic properties and singularity-free orientation representation. Accelerometer and gyroscope biases are denoted by $\mathbf{b}_a(t)$ and $\mathbf{b}_g(t)$ and constitute the intrinsic parameters of the IMU. The sensor extrinsic parameter $\mathbf{p}^I_U(t)$ is the position of the mobile radio expressed in the IMU frame and $t_d$ is the temporal offset between UWB and IMU sampling instants.

\subsection{Gyroscope and accelerometer model} \label{sec:imu_model}
The measured angular rate by a single-axis gyroscope $\omega_m$ is related to the true angular rate $\omega_t$ as:
\begin{equation}
\omega_m = \omega_t + b_\omega + n_\omega,
\label{eqn:gyro_model}
\end{equation}
where $b_\omega$ is a time-varying bias and $n_\omega$ is assumed to be a realization of an additive white Gaussian noise (AWGN) source with covariance $N_\omega$ i.e. $n_\omega \sim \mathcal{N}(0, N_\omega)$.  The bias is modelled as being driven by a AWGN noise source: $\dot{b}_\omega = n_{b\omega}$, where $n_{b\omega} \sim \mathcal{N}(0, N_{b\omega})$. A similar model is used for the individual axes of a triaxial gyroscope: $\bm{\omega}_m = (\omega_x, \omega_y, \omega_z)$, $\mathbf{b}_\omega = (b_{\omega x}, b_{\omega y}, b_{\omega z})$, $\mathbf{n}_{b\omega} = (n_{b \omega x}, n_{b \omega y}, n_{b \omega z})$, $\mathbf{Q}_\omega = \text{diag}([\sigma_{\omega x}^2, \sigma_{\omega y}^2, \sigma_{\omega z}^2])$ and $\mathbf{Q}_{b\omega} = \text{diag}([\sigma_{b\omega x}^2, \sigma_{b\omega y}^2, \sigma_{b\omega z}^2])$. The measured linear acceleration $a_m$ is related to the true linear acceleration $a_t$ as:
\begin{equation}
a_m = a_t + b_a + n_a,
\label{eqn:accel_model}
\end{equation}
where $b_a$ is a time-varying bias and $n_a$ is assumed to be an AWGN source of covariance $N_a$ i.e. $n_a \sim \mathcal{N}(0, N_a)$. As before, the bias is modelled as being driven by another AWGN noise source $n_{ba} \sim \mathcal{N}(0, N_{ba})$: $\dot{b}_a = n_{ba}$. The model is extended to a triaxial accelerometer with: $\mathbf{a}_m = (a_x, a_y, a_z)$, $\mathbf{b}_a = (b_{ax}, b_{ay}, b_{az})$, $\mathbf{n}_{ba} = (n_{b a x}, n_{b a y}, n_{b a z})$, $\mathbf{Q}_a = \text{diag}([\sigma_{a x}^2, \sigma_{a y}^2, \sigma_{a z}^2])$ and $\mathbf{Q}_{ba} = \text{diag}([\sigma_{bax}^2, \sigma_{bay}^2, \sigma_{baz}^2])$.

\subsection{Motion model}
The motion model in this work is a 3D kinematic motion model where the accelerometer and gyroscope measurements are used as control inputs:
\begin{align}
	&\dot{\mathbf{p}}^W_I = \mathbf{v}^W_I,&
	&\dot{\mathbf{q}}^W_I = \dfrac{1}{2}\bm{\Omega}(\bm{\omega}_t)\mathbf{q}^W_I,\label{eqn:non_linear_model_1}\\
	&\dot{\mathbf{v}}^W_I = \mathbf{R}^W_I \mathbf{a}_t - \mathbf{g}^W,&
	&\dot{\mathbf{b}}_a = \mathbf{n}_{ba},\label{eqn:non_linear_model_2}\\
	&\dot{\mathbf{b}}_g = \mathbf{n}_{b\omega},& 
	&\dot{\mathbf{p}}^I_U = \mathbf{0}_{3}, \label{eqn:non_linear_model_3} &\\
	&\dot{t_d} = 0, \label{eqn:non_linear_model_4}
\end{align}
where,
\begin{align*}
\bm{\Omega}(\bm{\omega}) &= \begin{bmatrix}
0 & -\bm{\omega}^T\\
\bm{\omega} & -[\bm{\omega}]_\times
\end{bmatrix},
\end{align*}
$\mathbf{g}^W = [0, 0, 9.8]^T m/s^2$ represents the acceleration due to gravity in the world frame, $\mathbf{R}^W_I := \mathbf{R}\{\mathbf{q}^W_I\}$ is the direction cosine matrix corresponding to the nominal orientation, and $[\cdot]_\times$ denotes the skew-symmetric cross-product matrix. 

\subsection{Observation model}
In a tightly-coupled system, the observation model is the distance between an anchor and the mobile radio. The measured distance to the $i^{th}$ anchor at time $t_r$ is:
\begin{equation}
    h(\mathbf{p}^W_i, \mathbf{x}(t_r)) = \| \mathbf{p}^W_i - \mathbf{p}^W_U(t_r) \|_2 + n_r(t_r),
    \label{eqn:meas_model}
\end{equation}
\noindent where $\mathbf{p}^W_i$ is the position of the $i^{th}$ static anchor, $\mathbf{p}^W_U(t_r) = \mathbf{R}^W_I(t_r) \mathbf{p}^I_U  + \mathbf{p}^W_I (t_r)$ is the position of the mobile radio at time $t_r$ and $\|.\|_2$ denotes the $\ell_2$ norm. The measurement noise $n_r(t_r)$ is assumed to be a zero-mean AWGN process, with covariance $\text{Q}_r$, i.e. $n_r(t_r) \sim \mathcal{N}(0, \text{Q}_r)$. Calculating \eqref{eqn:meas_model} requires the value of the state \eqref{eqn:state} at time $t_r$. In this paper, we use UWB time as reference time. Let $t_r$ be the timestamp of the latest UWB and IMU measurement. Note that due to the temporal offset $t_d$, the IMU measurement is actually generated at time $t_I = t_r - t_d$. To calculate the state at time $t_r$, we propogate the state at time $t_I$ for $t_d$ seconds using the motion model  \eqref{eqn:non_linear_model_1}-\eqref{eqn:non_linear_model_3} and the IMU measurement at time $t_I$. 

\section{Observability and Identifiability analysis}
The goal of observability analysis is to check if the state of a system can be determined uniquely given the outputs of a system. We decompose the problem of observability of the state~(\ref{eqn:state}) into \emph{(i)} \emph{local identifiability} of $t_d$ \cite{anguelova2008} and \emph{(ii)} \emph{local weak observability} \cite{Hermann1977} of the part of the state excluding $t_d$. The motivation and justification is provided below.

From (\ref{eqn:meas_model}), we see that $t_d$ can be modelled as a time-delay parameter. In \cite{anguelova2008}, it is shown that the identifiability of a single unknown constant time-delay can be analyzed independent of the observability of the state. The authors show that the identifiability of a single unknown constant time-delay in a nonlinear system is not directly related to the observability of other system states or parameters.  Specifically, the time-delay parameter can be identified directly from the \emph{input-output} representation \cite{anguelova2008} of the system. Thus, determining the identifiability of time-delay is equivalent to determining the existence of an input-output representation, which solely depends on the systems inputs, outputs, and their time-derivatives and not on the state. A necessary and sufficient condition for the existence of an input-output representation is the occurrence of the delayed input variables (in this case $\mathbf{a}_m$ and $\bm{\omega}_m$) in the output (\ref{eqn:meas_model}) (Theorem 2 in \cite{anguelova2008}). We use this approach to analyze the local identifiability of $t_d$.

As noted in \cite{anguelova2008}, identifiability of time-delay does not imply observability of the state. After proving the local identifiability of  $t_d$, we analyze the observability of the part of the state excluding $t_d$, $\widetilde{\mathbf{x}} = (\mathbf{p}^W_I, \mathbf{v}^W_I, \mathbf{q}^W_I, \mathbf{b}_a, \mathbf{b}_\omega, \mathbf{p}^I_U)$. For this, we use the method outlined in \cite{Hermann1977}, which involves determining the rank of the \emph{observability matrix} $\mathcal{O}$ \cite{Hermann1977}. The methods outlined in \cite{anguelova2008} and \cite{Hermann1977} consider the case of noise-free nonlinear systems. Hence, for analysis, we neglect the effect of noise parameters in the following sections. 

\subsection{Local identifiability of the temporal offset}
Following \cite{anguelova2008}, the local identifiablility of $t_d$ depends on whether it is present in the input-output representation of the system (\ref{eqn:non_linear_model_1})-(\ref{eqn:non_linear_model_3}) and (\ref{eqn:meas_model}), that is, the presence of delayed input variable(s) in the output function. In our case, it is sufficient to show that $\mathbf{a}_m(t_r - t_d)$ or $\bm{\omega}_m(t_r - t_d)$ appear in the measurement model. Without loss of generality, the measurement model (\ref{eqn:meas_model}) can be written as:
\begin{equation}
h(\mathbf{p}^W_i, \mathbf{x}(t_r)) = \frac{1}{2}\| \mathbf{p}^W_i - \mathbf{p}^W_U(t_r) \|^2_2,
\label{eqn:td_meas_model}
\end{equation}
where $\mathbf{p}^W_U(t_r)$ is the position of the mobile radio at time $t_r$ and $\| \cdot \|$ denotes the $\ell_2$ norm. We consider a single anchor as the analysis is identical for other anchors. The factor 1/2 is introduced for simplifying the analysis. Since there are two inputs, $t_d$ can be locally identified if either $\mathbf{a}_m$ or $\bm{\omega}_m$ is excited.  In the following theorem, we state the conditions under which $t_d$ is locally identifiable.

\begin{theorem}
The temporal offset parameter $t_d \in (0, T)$, for some $T \in \mathbb{R}$, is locally identifiable from the observation model (\ref{eqn:td_meas_model}) if:
\begin{description}
\item[(T1)] the mobile radio is not co-located with the anchor; and either {\normalfont{(T2)}} or {\normalfont{(T3)}} is satisfied:
\item[(T2)] at least one of $a_x$, $a_y$, or $a_z$ is excited; or
\item[(T3)] the mobile radio is not co-located with the IMU and all three of $\omega_x$, $\omega_y$, or $\omega_z$ are excited.
\end{description}
\label{lem:td_cond_1}
\end{theorem}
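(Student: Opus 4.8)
The plan is to invoke Theorem 2 of \cite{anguelova2008}: $t_d$ is locally identifiable exactly when at least one delayed input variable appears explicitly in the input-output representation of the system, which here reduces to showing that $\mathbf{a}_m(t_r - t_d)$ or $\bm{\omega}_m(t_r - t_d)$ occurs in the output (\ref{eqn:td_meas_model}) (or one of its time-derivatives). So the concrete task is to write $h(\mathbf{p}^W_i, \mathbf{x}(t_r))$ as a function of $t_r$, substitute the propagated state $\mathbf{p}^W_U(t_r)$ in terms of the state at $t_I = t_r - t_d$ and the IMU readings over $[t_I, t_r]$, differentiate with respect to $t_r$ enough times, and identify which derivative order first brings in $\mathbf{a}_m(t_r-t_d)$ or $\bm{\omega}_m(t_r-t_d)$. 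Then I read off the non-degeneracy conditions that make the coefficient of that delayed input non-vanishing — those conditions will be precisely (T1), (T2), (T3).

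First I would set up the kinematics: by (\ref{eqn:non_linear_model_1})--(\ref{eqn:non_linear_model_3}) the mobile-radio position is $\mathbf{p}^W_U(t_r) = \mathbf{p}^W_I(t_r) + \mathbf{R}^W_I(t_r)\,\mathbf{p}^I_U$, with $\dot{\mathbf{p}}^W_I = \mathbf{v}^W_I$, $\dot{\mathbf{v}}^W_I = \mathbf{R}^W_I(\mathbf{a}_m - \mathbf{b}_a) - \mathbf{g}^W$ (noise dropped), and $\dot{\mathbf{R}}^W_I = \mathbf{R}^W_I [\bm{\omega}_m - \mathbf{b}_\omega]_\times$. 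Writing $\mathbf{e}(t_r) := \mathbf{p}^W_i - \mathbf{p}^W_U(t_r)$, I have $h = \tfrac12 \mathbf{e}^T\mathbf{e}$, so $\dot h = -\mathbf{e}^T \dot{\mathbf{p}}^W_U$ and $\ddot h = \dot{\mathbf{p}}^{W\,T}_U \dot{\mathbf{p}}^W_U - \mathbf{e}^T \ddot{\mathbf{p}}^W_U$. Now $\dot{\mathbf{p}}^W_U = \mathbf{v}^W_I + \mathbf{R}^W_I [\bm{\omega}_m - \mathbf{b}_\omega]_\times \mathbf{p}^I_U$, which already contains $\bm{\omega}_m(t_r - t_d)$ — here I must be careful about the bookkeeping of the delay, since the IMU sample feeding the propagation step is $\bm{\omega}_m(t_I)$ and $\mathbf{a}_m(t_I)$ with $t_I = t_r - t_d$. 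And $\ddot{\mathbf{p}}^W_U$ brings in $\dot{\bm{\omega}}_m$ as well as $\mathbf{R}^W_I(\mathbf{a}_m - \mathbf{b}_a) - \mathbf{g}^W$, i.e. $\mathbf{a}_m(t_r - t_d)$. So the delayed inputs appear already in the first and second derivatives of the output.

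Next I would extract the non-degeneracy conditions. The coefficient multiplying $\mathbf{a}_m(t_r-t_d)$ in $\ddot h$ is (up to sign) $\mathbf{e}^T \mathbf{R}^W_I$, which is nonzero provided $\mathbf{e} \neq \mathbf 0$, i.e. the radio is not co-located with the anchor (T1); combined with the accelerometer actually being excited in some axis (T2), the delayed input is genuinely present. For the gyroscope route, the coefficient of $\bm{\omega}_m(t_r-t_d)$ coming through $\dot{\mathbf{p}}^W_U$ (and the $\dot{\bm\omega}_m$-free part of $\ddot h$) involves the factor $\mathbf{R}^W_I[\,\cdot\,]_\times \mathbf{p}^I_U$ applied against $\mathbf{e}$; this vanishes identically when $\mathbf{p}^I_U = \mathbf 0$, which is exactly why (T3) requires the radio not co-located with the IMU, and it requires all three gyro axes excited because $[\bm\omega]_\times \mathbf{p}^I_U$ can be made to vanish if only the component along $\mathbf{p}^I_U$ is excited — so to guarantee identifiability regardless of the direction of $\mathbf{p}^I_U$ one needs full excitation. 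I would make this rigorous by showing that if (T1) holds and (T2) or (T3) holds, then the map from $t_d$ to the finite jet of outputs is locally injective (its derivative in $t_d$ is nonzero), which is the local identifiability criterion.

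The main obstacle I anticipate is the careful treatment of the delay-versus-propagation bookkeeping: the state at $t_r$ is not a primitive variable but is obtained by integrating the motion model from $t_I = t_r - t_d$ using $\mathbf{a}_m(t_I), \bm{\omega}_m(t_I)$ held over the interval, so when I differentiate the output with respect to $t_r$ I must correctly account for how $t_d$ enters both the lower integration limit and the argument of the inputs. Getting the chain rule right here — and verifying that the coefficient of the delayed input does not accidentally cancel against a term produced by differentiating the propagation — is the delicate step; the rest is the routine algebra of differentiating a squared norm and reading off when a vector is nonzero. A secondary subtlety is justifying that "excited" (nonzero on a set of positive measure, or with nonzero derivative) is the right genericity hypothesis to ensure the delayed input cannot be absorbed into the non-delayed part of the input-output equation; I would phrase (T2)/(T3) accordingly and point to the precise statement of Theorem 2 in \cite{anguelova2008} to close the argument.
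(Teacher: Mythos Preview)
Your plan is essentially the same as the paper's: invoke the Anguelova--Wennberg criterion and verify that the delayed inputs $\mathbf{a}_m(t_r-t_d)$, $\bm{\omega}_m(t_r-t_d)$ appear in the output with nonvanishing coefficients. The paper executes this by Euler-propagating the state from $t_I$ to $t_r$ and then taking partial derivatives $\partial h/\partial a_x(t_r-t_d)$, $\partial h/\partial \omega_x(t_r-t_d)$, etc., directly; you instead take time derivatives $\dot h,\ddot h$ and read off the input coefficients. These are two bookkeeping devices for the same computation and produce the same coefficient vectors (up to harmless factors of $t_d$), so your accelerometer analysis---coefficient $\mathbf{e}^T\mathbf{R}^W_I$, nonzero iff $\mathbf{e}\neq\mathbf 0$, i.e.\ (T1)---matches the paper exactly.

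Your gyroscope argument has a genuine gap, however. The coefficient of $\omega_x$ in $\dot h$ is not $[\mathbf 1_x]_\times\mathbf{p}^I_U$ but the scalar $\mathbf{e}^T\mathbf{R}^W_I[\mathbf 1_x]_\times\mathbf{p}^I_U = (\mathbf{p}^I_i)^T[\mathbf 1_x]_\times\mathbf{p}^I_U$, where $\mathbf{p}^I_i = (\mathbf{R}^W_I)^T(\mathbf{p}^W_i-\mathbf{p}^W_I)$ is the anchor position in the IMU frame. This vanishes not only when the excited axis is parallel to $\mathbf{p}^I_U$ (the case you identified) but also when it is parallel to $\mathbf{p}^I_i$. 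With only two axes excited, one could align with $\mathbf{p}^I_U$ and the other with $\mathbf{p}^I_i$, killing both coefficients; that is precisely why (T3) demands all three axes, and your stated justification (``only the component along $\mathbf{p}^I_U$'' is problematic) would only force two. When you make the argument rigorous you will need to track both null directions---the paper does this explicitly via its cases S4/S6/S8 and S5/S7/S9.
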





\begin{proof}
Expand $\mathbf{p}^W_U$ in measurement model \eqref{eqn:td_meas_model}:
\begin{align*}
h(\mathbf{p}^W_i, \mathbf{x}(t_r)) = \frac{1}{2}\| \mathbf{p}^W_i - \mathbf{R}^W_I (t_r) \mathbf{p}^I_U - \mathbf{p}^W_I(t_r) \|^2_2.
\end{align*}

To calculate $\mathbf{p}^W_I(t_r)$, we propogate the state forward in time by using the motion model \eqref{eqn:non_linear_model_1}-\eqref{eqn:non_linear_model_3}. We perform Euler integration by assuming a constant $\mathbf{a}_m$ for the duration of the integration $t_d$:
\begin{align*}
\mathbf{p}^W_I(t_r) \approx \mathbf{p}^W_I(t_I) + \mathbf{v}^W_I(t_I) t_d + \frac{1}{2}\mathbf{R}^W_I(t_I)(\mathbf{a}_m(t_I) - \mathbf{b}_a(t_I)) t_d^2.
\end{align*}

Rewriting $\mathbf{a}_m(t_I)$ as $\mathbf{a}_m(t_r - t_d)$ in the above expression yields:
\begin{equation}
\mathbf{p}^W_I(t_r) \approx \mathbf{p}^W_I(t_I) + \mathbf{v}^W_I(t_I) t_d + \frac{1}{2}\mathbf{R}^W_I(t_I)(\mathbf{a}_m(t_r - t_d) - \mathbf{b}_a(t_I)) t_d^2.
\end{equation}

Now we expand $\mathbf{R}^W_I (t_r) \mathbf{p}^I_U = \mathbf{R}^W_I (t_I + t_d) \mathbf{p}^I_U$. For a small angular-increment $\delta \mathbf{\theta} = \bm{\omega}_t t_d$, we have the following relation:
\begin{align*}
\mathbf{R}^W_I(t_I + t_d) &=  \mathbf{R}^W_I(t_I) e^{[\bm{\omega_t}(t_I) t_d]_\times},
\end{align*}
where $e^A$ denotes the matrix exponential for a matrix $A$ \cite{sola2017}. Approximating the matrix exponential to first-order:
\begin{align*}
e^{([\bm{\omega_m}(t_I) t_d]_\times)} \approx \mathbf{I}  + [\bm{\omega_t}(t_I) t_d]_\times + \mathcal{O}(t_d^2),
\end{align*}
where $\mathbf{I}$ is the identity matrix. Using $\bm{\omega}_t = \bm{\omega}_m - \mathbf{b}_\omega$ we have:
\begin{align*}
\mathbf{R}^W_I (t_r) \mathbf{p}^I_U &\approx 
\mathbf{R}^W_I(t_I) \left( \mathbf{I}  + [\bm{\omega_t}(t_I) t_d]_\times \right) \mathbf{p}^I_U \\
&= \mathbf{R}^W_I(t_I) \mathbf{p}^I_U + \left( \mathbf{R}^W_I(t_I) [\bm{\omega}_m(t_I) - \mathbf{b}_\omega(t_I)]_\times \right) \mathbf{p}^I_U ~ t_d,\\
&= \mathbf{R}^W_I(t_I) \mathbf{p}^I_U + \mathbf{R}^W_I(t_I) [\bm{\omega}_m(t_I) - \mathbf{b}_\omega(t_I)]_\times \mathbf{p}^I_U ~ t_d.
\end{align*}
Rewriting $\bm{\omega}_m(t_I)$ as $\bm{\omega}_m(t_r - t_d)$ in the above expression and using the relation yields:
\begin{equation}
\mathbf{R}^W_I(t_r) \mathbf{p}^I_U \approx \mathbf{R}^W_I(t_I) \mathbf{p}^I_U + \mathbf{R}^W_I(t_I) \mathbf{p}^I_U [(\bm{\omega}_m(t_r - t_d) - \mathbf{b}_\omega(t_I))]_\times ~ t_d.
\end{equation}

\noindent The required condition for $t_d$ to be locally identifiable is:
\begin{equation}
\dfrac{\partial h^{(k)}(\mathbf{p}^W_i, \mathbf{x}(t_r))}{\partial u^{(k)}(t_r - t_d)} \neq 0
\label{eqn:ide_cond}
\end{equation}
for some $k \geq 0$. First, we consider the derivative with respect to accelerometer inputs. Since the accelerometer can be excited along three orthogonal axes, we calculate the derivative along the individual axes: $\mathbf{a}_m = [\mathbf{1}_x a_x + \mathbf{1}_y a_y + \mathbf{1}_z a_z]$, where $\mathbf{1}_x = [1,0,0]^T$, $\mathbf{1}_y=[0,1,0]^T$ and $\mathbf{1}_z=[0,0,1]^T$, represent the axes of acceleration in the IMU body frame. The derivative of \eqref{eqn:meas_model}, when the accelerometer axes are excited is given by:
\begin{align*}
\dfrac{\partial h \left( \mathbf{p}^W_i, \mathbf{x} (t_r) \right)}{\partial a_x(t_r - t_d)} &= \dfrac{1}{2} \delta \mathbf{p}_i^T ~ \mathbf{R}^W_I(t_I) ~ \mathbf{1}_x ~t_d^2,\\
\dfrac{\partial h \left( \mathbf{p}^W_i, \mathbf{x} (t_r) \right)}{\partial a_y(t_r - t_d)} &= \dfrac{1}{2} \delta \mathbf{p}_i^T ~ \mathbf{R}^W_I(t_I) ~ \mathbf{1}_y ~t_d^2,\\
\dfrac{\partial h \left( \mathbf{p}^W_i, \mathbf{x} (t_r) \right)}{\partial a_z(t_r - t_d)} &= \dfrac{1}{2} \delta \mathbf{p}_i^T ~ \mathbf{R}^W_I(t_I) ~ \mathbf{1}_z ~t_d^2.
\end{align*}

where $\delta \mathbf{p}_i = \mathbf{p}^W_i - \mathbf{R}^W_I(t_I) \mathbf{p}^I_U - \mathbf{p}^W_I(t_I)$. The above derivatives are simultaneously zero if:
\begin{align*}
\delta \mathbf{p}_i^T \mathbf{R}^W_I(t_I) ~t_d^2  &= 0.
\end{align*}

For any non-zero time delay $t_d$ cannot be zero. Note that a valid rotation matrix $\mathbf{R}$ has determinant 1: its kernel space is trivial, $\{ \mathbf{R} \mathbf{v} = 0 \implies \mathbf{v} = \mathbf{0} \}$. Thus, $\delta \mathbf{p}_i^T \cdot \mathbf{R}^W_I(t_I) = 0$, only if $\delta \mathbf{p}_i = \mathbf{0}$, which requires the mobile UWB radio and the $i^{th}$ anchor to be co-located, practically this is not possible. Thus, if the mobile radio and the anchor are not co-located, then $t_d$ is locally identifiable when accelerometer axes are excited .

Now, we consider the derivative with respect to gyroscope inputs. Similar to the case above, the gyroscope can be excited along three orthogonal axes, we calculate the derivative along the individual axes: $\bm{\omega}_m = [\mathbf{1}_x \omega_x + \mathbf{1}_y \omega_y + \mathbf{1}_z \omega_z]$, where $\mathbf{1}_x = [1,0,0]^T$, $\mathbf{1}_y=[0,1,0]^T$ and $\mathbf{1}_z=[0,0,1]^T$, represent the axes of rotation in the IMU body frame. The derivative of \eqref{eqn:meas_model}, when $\omega_x$ is excited is given by:
\begin{align}
\dfrac{\partial h \left( \mathbf{p}^W_i, \mathbf{x} (t_r) \right)}{\partial \omega_x(t_r - t_d)} &= \dfrac{1}{2} \delta \mathbf{p}_i^T ~ \mathbf{R}^W_I(t_I) ~ [\mathbf{1}_x]_\times ~ \mathbf{p}^I_U ~ t_d,\\
&= \dfrac{1}{2} \left( {\mathbf{R}^W_I}^T(t_I)\delta \mathbf{p}_i \right)^T  ~ [\mathbf{1}_x]_\times ~ \mathbf{p}^I_U ~ t_d. \label{eqn:gyro_der}
\end{align}

Consider the following vector:
\begin{align}
{\mathbf{R}^W_I}^T(t_I)\delta \mathbf{p}_i &= {\mathbf{R}^W_I}^T(t_I) \left( \mathbf{p}^W_i - \mathbf{R}^W_I(t_I) \mathbf{p}^I_U - \mathbf{p}^W_I(t_I) \right), \nonumber \\
&= {\mathbf{R}^W_I}^T(t_I) \left( \mathbf{p}^W_i -  \mathbf{p}^W_I(t_I) \right) - \mathbf{p}^I_U, \label{eqn:imu_centric}
\end{align}

where use the fact for a rotation matrix $\mathbf{R}$, $\mathbf{R}^{-1} = \mathbf{R}^T$ and $\mathbf{R}^T \mathbf{R} = \mathbf{I}$. Equation \eqref{eqn:imu_centric} projects the measurement model into an IMU centeric frame. Using \eqref{eqn:imu_centric}  in \eqref{eqn:gyro_der} gives:
\begin{align}
\dfrac{\partial h \left( \mathbf{p}^W_i, \mathbf{x} (t_r) \right)}{\partial \omega_x(t_r - t_d)} &= \left( {\mathbf{R}^W_I}^T(t_I) \left( \mathbf{p}^W_i -  \mathbf{p}^W_I(t_I) \right) - \mathbf{p}^I_U \right)^T  ~ [\mathbf{1}_x]_\times ~ \mathbf{p}^I_U ~ t_d, \nonumber\\
&= \left( {\mathbf{R}^W_I}^T(t_I) \left( \mathbf{p}^W_i -  \mathbf{p}^W_I(t_I) \right) \right)^T [\mathbf{1}_x]_\times ~ \mathbf{p}^I_U ~ t_d - {\mathbf{p}^I_U}^T   ~ [\mathbf{1}_x]_\times ~ \mathbf{p}^I_U ~ t_d, \nonumber\\
&= \left( {\mathbf{R}^W_I}^T(t_I) \left( \mathbf{p}^W_i -  \mathbf{p}^W_I(t_I) \right) \right)^T [\mathbf{1}_x]_\times ~ \mathbf{p}^I_U ~ t_d + {\mathbf{p}^I_U}^T   ~ [\mathbf{p}^I_U]_\times ~ \mathbf{1}_x ~ t_d, \nonumber\\
&= \left( {\mathbf{R}^W_I}^T(t_I) \left( \mathbf{p}^W_i -  \mathbf{p}^W_I(t_I) \right) \right)^T [\mathbf{1}_x]_\times ~ \mathbf{p}^I_U ~ t_d, \nonumber\\
&= {\mathbf{p}^I_i}^T [\mathbf{1}_x]_\times ~ \mathbf{p}^I_U ~ t_d, \label{eqn:gyro_cond}
\end{align}
where we use the fact that for a skew-symmetric matrix $S$, $S^T = -S$ and $\mathbf{p}^I_i = {\mathbf{R}^W_I}^T(t_I) \left( \mathbf{p}^W_i -  \mathbf{p}^W_I(t_I) \right)$ is the position of the anchor expressed in IMU frame. The expression \eqref{eqn:gyro_cond} is zero only if the following conditions occur:
\begin{itemize}
\item[S1]: $t_d = 0$,  
\item[S2]: $\mathbf{p}^I_U = \mathbf{0}$, 
\item[S3]: $ \mathbf{p}^I_i = \mathbf{0}$,
\item[S4]: $\mathbf{1}_x \times \mathbf{p}^I_U  = \mathbf{0}$, or
\item[S5]: $\mathbf{1}_x \times \mathbf{p}^I_i = \mathbf{0}$.
\end{itemize}

For any non-zero time delay $t_d \neq 0$. Condition S2 occurs when the mobile radio and the IMU are co-located. Condition S3 requires the mobile radio and the $i^{th}$ anchor to be co-located. Thus, for $t_d$ to be identifiable, the mobile radio cannot be co-located with the IMU or the anchor. Condition S4 occurs when the axis of rotation is aligned with vector from the IMU to the mobile radio. Similary, condition S5 occurs when the axis of rotation is aligned with the vector from the IMU to the $i^{th}$ anchor. A similar analysis for $\omega_y$ and $\omega_z$ yields:
\begin{itemize}
\item[S6]: $\mathbf{1}_y \times \mathbf{p}^I_U  = \mathbf{0}$, 
\item[S7]: $\mathbf{1}_y \times \mathbf{p}^I_i = \mathbf{0}$,
\item[S8]: $\mathbf{1}_z \times \mathbf{p}^I_U  = \mathbf{0}$,  
\item[S9]: $\mathbf{1}_z \times \mathbf{p}^I_i = \mathbf{0}$.
\end{itemize}

Note that S4, S6, and S8 cannot be zero simultaneously, i.e. if $\mathbf{p}^I_U$ is aligned with $\mathbf{1}_x$, then it cannot be aligned with $\mathbf{1}_y$ which is orthogonal to $\mathbf{1}_x$. Similarly, S5, S7, and S9 cannot be zero simultaneously. However, it might be the case that $\omega_x$ aligns with $\mathbf{p}^I_i$ and $\omega_y$ aligns with $\mathbf{p}^I_U$. Thus, excitation of atleast one of $[\omega_x$, $\omega_y$, $\omega_z]^T$ is a necessary condition and excitation of all three is a sufficient condition for identifiability of $t_d$. With multiple non-collinear anchors, excitation of two of $\omega_x$, $\omega_y$, or $\omega_z$ is sufficient. A requirement for local identifiability of $t_d$ is that a change in the input causes a change in the measured range. Condition T3 in Theorem \ref{lem:td_cond_1} reflects the fact that if the mobile radio and the IMU are co-located, then for pure rotational motion the measured range is constant.
\end{proof}

\subsection{Observability of a tightly-coupled UWB-IMU system}

To study the observability of the system outlined in (\ref{eqn:non_linear_model_1})-(\ref{eqn:non_linear_model_3}) we use tools from differential geometry. Specifically, we use the concept of \textit{local weak observability} \cite{Hermann1977}. The analysis outlined in \cite{Hermann1977} considers the case of noise free autonomous systems without any delay in the observation model (excluding (\ref{eqn:non_linear_model_4})).

\subsubsection{Differential geometry and observability of nonlinear systems} \label{sec:obs_ana}
Given a vector field $\bm{f}: M \rightarrow TM$, where $TM \in \mathbb{R}^n$ is the tangent bundle of the smooth manifold $M$ of dimension $n$, the Lie derivative of  a smooth function $h: \mathbb{R}^n \rightarrow \mathbb{R}$ along the vector field $\bm{f}$ at $ \mathbf{x} \in M$ is:
\begin{equation}
	L_{\bm{f}} h(\mathbf{x}) = \dfrac{\partial{{h}(\mathbf{x})}}{\partial{\mathbf{x}}} \bm{f}(\mathbf{x}).
	\label{eqn:lie_derivative}
\end{equation}

The corresponding gradient vector is:
\begin{equation}
	\nabla {L_{\bm{f}} h(\mathbf{x})} = \begin{bmatrix}
	\dfrac{\partial{{L_{\bm{f}}{h}(\mathbf{x})}}}{\partial{x_1}},...,
	\dfrac{\partial{{L_{\bm{f}}{h}(\mathbf{x})}}}{\partial{x_n}},  
\end{bmatrix}.
\end{equation}
Higher-order Lie derivatives are defined recursively as:
\begin{align*}
	L^k_{\bm{f}}{h}(\mathbf{x}) = \nabla L^{k-1}_{\bm{f}}{h}(\mathbf{x}) \bm{f}(\mathbf{x}), 
\end{align*}
with the zeroth-order Lie derviative being $L^0_{\bm{f}}{h}(\mathbf{x}) = {h}(\mathbf{x})$.

We consider causal nonlinear systems that are affine in the control input: 
\begin{equation}
	\textbf{S} : 
	\begin{cases}
	\dot{\mathbf{x}} = \bm{f}_0(\mathbf{x}) + \sum\limits_{i} \bm{f}_i(\mathbf{x})u_i,\\
	y = {h}(\mathbf{x}),
	\end{cases}
	\label{eqn:control-affine-form}
\end{equation}

\noindent where $u \in \Gamma \subset \mathbb{R}^m$ is the input and $y \in \mathbb{R}$ is the output and $\mathbf{x} \in M$ is the state. The functions $\bm{f}_0$ and $\bm{f}_i$ are assumed to be smooth $C^{\infty}$. In this paper we use the concept of \textit{local weak observablility}. The system \textbf{S} in (\ref{eqn:control-affine-form}) is locally weakly observable at $\mathbf{x}_0$ if there exists an open neighbourhood $B$ of $\mathbf{x}_0$ in $M$ such that for every open neighbourhood $V$ of $\mathbf{x}_0$ contained in $B$, the only point \textit{indistinguishable} from $\mathbf{x}_0$ is $\mathbf{x}_0$. Let $\mathcal{O}$ denote the matrix formed by stacking the gradients of Lie derivatives of the observation function ${h}$. The system \textbf{S} is said to satisfy the \textit{observability rank condition} if $\mathcal{O}$ has full column rank. Furthermore, \textbf{S} is said to be locally weakly observable if it satisfies the observability rank condition.

\subsubsection{Observability analysis}
The system dynamics (\ref{eqn:non_linear_model_1})-(\ref{eqn:non_linear_model_3}) are rearranged to have a \emph{control affine}\cite{Hermann1977} form:
\begin{align}
	\dot{\widetilde{\mathbf{x}}} =
	\underbrace{\begin{bmatrix}[1.2]
		\mathbf{v}^W_I\\
		-\mathbf{R}^W_I\mathbf{b}_a - \mathbf{g}^W\\
		-\frac{1}{2}\mathbf{\Xi}\{\mathbf{q}^W_I\}\mathbf{b}_{\omega}\\
		\mathbf{0}_{3}\\
		\mathbf{0}_{3}\\
		\mathbf{0}_{3}\\
		\end{bmatrix}}_{f_0} +
	\underbrace{\begin{bmatrix}[1.2]
		\mathbf{0}_{3 \times 3}\\
		\mathbf{R}^W_I\\
		\mathbf{0}_{3 \times 4}\\
		\mathbf{0}_{3 \times 3}\\
		\mathbf{0}_{3 \times 3}\\
		\mathbf{0}_{3 \times 3}\\
	\end{bmatrix}}_{f_1} \mathbf{a}_m 
	+ \underbrace{\begin{bmatrix}[1.2]
		\mathbf{0}_{3 \times 3}\\
		\mathbf{0}_{3 \times 3}\\
		\frac{1}{2}\mathbf{\Xi}\{\mathbf{q}^W_I\}\\
		\mathbf{0}_{3 \times 3}\\
		\mathbf{0}_{3 \times 3}\\
		\mathbf{0}_{3 \times 3}\\
	\end{bmatrix}}_{f_2} \bm{\omega}_m,
	\label{eqn:system_dynamics_affine}
\end{align} 
where $\widetilde{\mathbf{x}} = (\mathbf{p}^W_I, \mathbf{v}^W_I, \mathbf{q}^W_I, \mathbf{b}_a, \mathbf{b}_\omega, \mathbf{p}^I_U)$ is the system state excluding the temporal offset $t_d$ and   
\begin{align*}
	\mathbf{\Xi}\{\mathbf{q}^W_I\} &= 
	\begin{bmatrix*}[r]
	  -q_1, & -q_2, & -q_3\\
       q_0, & -q_3, &  q_2\\
       q_3, &  q_0, & -q_1\\
      -q_2, &  q_1, &  q_0
      \end{bmatrix*}.
\end{align*}

Without loss of generality, the observation model (\ref{eqn:meas_model}) can be rewritten as:
\begin{equation}
	h(\mathbf{p}^W_i, \widetilde{\mathbf{x}}) = \dfrac{1}{2} \| \mathbf{p}^W_i - \mathbf{R}^W_I \mathbf{p}^I_U - \mathbf{p}^W_I \|^2_2,
	\nonumber
\end{equation}

Measurements from a single anchor are not sufficient to constraint the entire state. Hence, we consider measurements to three anchors $\mathbf{p}^W_a = [\mathbf{p}^W_i,~\mathbf{p}^W_j,~\mathbf{p}^W_k]$:
\begin{equation}
	\mathbf{h}(\mathbf{p}^W_a, \widetilde{\mathbf{x}}) = \dfrac{1}{2}\begin{bmatrix}[1.2]
	\|\mathbf{p}^W_i - \mathbf{R}^W_I \mathbf{p}^I_U - \mathbf{p}^W_I \|^2_2\\
	\|\mathbf{p}^W_j - \mathbf{R}^W_I \mathbf{p}^I_U - \mathbf{p}^W_I \|^2_2\\
	\|\mathbf{p}^W_k - \mathbf{R}^W_I \mathbf{p}^I_U - \mathbf{p}^W_I \|^2_2
	\end{bmatrix}. 
	\label{eqn:mul_anchor_meas_eq}
\end{equation}

We show that the system (\ref{eqn:system_dynamics_affine})-(\ref{eqn:mul_anchor_meas_eq}) is observable by employing the \textit{observability rank condition} mentioned above. The system (\ref{eqn:system_dynamics_affine})-(\ref{eqn:mul_anchor_meas_eq}) is said to be locally weakly observable if the corresponding observability matrix $\mathcal{O}$ has full column rank, which in this case is 19, corresponding to the size of the state (\ref{eqn:state}).
To prove this, we construct the matrix $\mathcal{O}$ by stacking higher-order Lie derivatives of the vector-valued function (\ref{eqn:mul_anchor_meas_eq}) along the vector field (\ref{eqn:system_dynamics_affine}). Block Gaussian elimination is then used to identify conditions under which $\mathcal{O}$ has full column rank.

\subsubsection{Observability Matrix Construction}
The observability matrix $\mathcal{O}$ is constructed by taking higher-order Lie derivatives of the measurement function along the system dynamics \cite{Hermann1977}.\\ \\
\noindent \textbf{Zeroth order Lie derivatives}\\
1. The zeroth order Lie derivative of $\mathbf{h}$ is:
\begin{equation}
	L^0 \mathbf{h}(\widetilde{\mathbf{x}}) = \mathbf{h}(\widetilde{\mathbf{x}}).
\end{equation}
The corresponding gradient is:
\begin{align}
	\nabla L^0{\mathbf{h}(\widetilde{\mathbf{x}})} = 
	\left[ 
		\begin{matrix}
		 -{\delta \mathbf{p}_{ijk}} & \mathbf{0}_{3 \times 3} & -{\delta \mathbf{p}_{ijk}} \mathbf{F}_0 & \mathbf{0}_{3 \times 3} & \mathbf{0}_{3 \times 3} & -{\delta \mathbf{p}_{ijk}} \mathbf{R}^W_I
  		\end{matrix}
  	\right],
\end{align}
where $\mathbf{F}_0$ is a 3x4 matrix and is given by,
\begin{align}
	\mathbf{F}_0 &= \dfrac{\partial \left( \mathbf{R}^W_I \mathbf{p}^I_U \right)}{\partial{\mathbf{q}^W_I}},	
\end{align}
and $\delta \mathbf{p}_{ijk} = [\delta \mathbf{p}_i^T, \delta \mathbf{p}_j^T, \delta \mathbf{p}_k^T]^T$ is a matrix of residuals with:
\begin{align*} 
	\delta \mathbf{p}_i &= \mathbf{p}^W_i - \mathbf{R}^W_I \mathbf{p}^I_U - \mathbf{p}^W_I,\\
	\delta \mathbf{p}_j &= \mathbf{p}^W_j - \mathbf{R}^W_I \mathbf{p}^I_U - \mathbf{p}^W_I,\\
	\delta \mathbf{p}_k &= \mathbf{p}^W_k - \mathbf{R}^W_I \mathbf{p}^I_U - \mathbf{p}^W_I.
\end{align*}

\noindent \textbf{First order Lie derivatives}\\
1. The first order Lie derivative of $\mathbf{h}$ along $f_0$:
\begin{align}
	L^1_{f_0} \mathbf{h}(\widetilde{\mathbf{x}}) &= \nabla L^0{\mathbf{h}(\widetilde{\mathbf{x}})} \cdot f_0, \nonumber \\
	&= \left[
	   		\begin{matrix}
	   	-\delta \mathbf{p}_{ijk} \mathbf{v}^W_I + \dfrac{1}{2} \delta \mathbf{p}_{ijk} \mathbf{F}_0 \bm{\Xi}\{\mathbf{q}^W_I\} \mathbf{b}_{\omega}
	   	\end{matrix}
	   \right].
\end{align}
The gradient of $L^1_{f_0} \mathbf{h}(\widetilde{\mathbf{x}})$ is:
\begin{align}
	\nabla L^1_{f_0} \mathbf{h}(\widetilde{\mathbf{x}}) &= 
	\left[ 
		 \begin{matrix} 
		 \mathbf{F}_1 & -\delta \mathbf{p}_{ijk} & \mathbf{F}_2 & \mathbf{0}_{3 \times 3} & \delta \mathbf{p}_{ijk} \mathbf{F}_3 & \mathbf{F}_4 
		 \end{matrix}
	\right],
\end{align} 
\noindent with $\mathbf{F}_1$, $\mathbf{F}_2$, $\mathbf{F}_3$ and $\mathbf{F}_4$ defined as:
\begin{align}
	&\mathbf{F}_1 = 
		\begin{bmatrix}
		\left( \mathbf{v}^W_I - \frac{1}{2} \mathbf{F}_0 \bm{\Xi}\{\mathbf{q}^W_I\} \mathbf{b}_{\omega} \right) ^T\\
		\left( \mathbf{v}^W_I - \frac{1}{2} \mathbf{F}_0 \bm{\Xi}\{\mathbf{q}^W_I\} \mathbf{b}_{\omega} \right) ^T\\
		\left( \mathbf{v}^W_I - \frac{1}{2} \mathbf{F}_0 \bm{\Xi}\{\mathbf{q}^W_I\} \mathbf{b}_{\omega} \right)^T
		\end{bmatrix},
	&\mathbf{F}_2 = \dfrac{\partial\{-\delta \mathbf{p}_{ijk} \mathbf{F}_1^T\}}{\partial{\mathbf{q}^W_I}},\\
	&\mathbf{F}_3 =  \dfrac{1}{2}\mathbf{F}_0 \bm{\Xi}\{\mathbf{q}^W_I\},
	&\mathbf{F}_4 = \dfrac{\partial\{-\delta \mathbf{p}_{ijk} \mathbf{F}_1^T\}}{\partial{\mathbf{p}^I_U}}.
\end{align}

\noindent \textbf{Second order Lie Derivatives}\\
1. $L^1_{f_0} \mathbf{h}(\widetilde{\mathbf{x}})$ along $f_1$:
\begin{align}
 L^1_{f_1} L^1_{f_0} \mathbf{h}(\widetilde{\mathbf{x}}) &= \nabla  L^1_{f_0} \mathbf{h}(\widetilde{\mathbf{x}}) \cdot f_1, \\
&= - \delta \mathbf{p}_{ijk} \mathbf{R}^W_I.
\end{align}

Note that this is a 3x3 matrix with the first column being influenced by measured acceleration along body x-axis ($a_x$). Similarly the second and third column are influenced by $a_y$ and $a_z$ respectively. Thus, the individual columns are stacked to form a 9x1 vector and the corresponding gradients are computed:
\begin{align}
\nabla L^1_{f_1} L^1_{f_0} \mathbf{h}(\widetilde{\mathbf{x}}) &= \begin{bmatrix}
\nabla L^1_{f_1} L^1_{f_0} \mathbf{h}(\widetilde{\mathbf{x}})[:,1] \\
\nabla L^1_{f_1} L^1_{f_0} \mathbf{h}(\widetilde{\mathbf{x}})[:,2] \\
\nabla L^1_{f_1} L^1_{f_0} \mathbf{h}(\widetilde{\mathbf{x}})[:,3]
\end{bmatrix},\\
&= \begin{bmatrix}
\mathbf{F}_5 & \mathbf{0}_{9 \times 3} & \mathbf{F}_6 & \mathbf{0}_{9 \times 3} & \mathbf{0}_{9 \times 3} & \mathbf{F}_{7}
\end{bmatrix}.
\end{align}

\noindent 2. $L^1_{f_0} \mathbf{h}(\widetilde{\mathbf{x}})$ along $f_0$:
\begin{align}
	L^2_{f_0} \mathbf{h}(\widetilde{\mathbf{x}}) &= \nabla L^1_{f_0}{\mathbf{h}(\widetilde{\mathbf{x}})} \cdot f_0, \nonumber \\
		&= 
	   	\bigg[\mathbf{F}_1\mathbf{v}^W_I + \delta{\mathbf{p}_{ijk}} \left(\mathbf{R}^W_I\mathbf{b}_a + \mathbf{g}^W \right) \nonumber 
	   	  - \dfrac{1}{2} \mathbf{F}_2 \bm{\Xi} \{ \mathbf{q}^W_I \} \mathbf{b}_{\omega} \bigg].
\end{align}

The gradient of $L^2_{f_0} \mathbf{h}(\widetilde{\mathbf{x}})$ is:
\begin{align}
	\nabla L^2_{f_0} \mathbf{h}(\widetilde{\mathbf{x}}) &= 
	\begin{bmatrix}
		\mathbf{F}_8 & \mathbf{F}_9 & \mathbf{F}_{10} & \delta \mathbf{p}_{ijk}\mathbf{R}^W_I & \mathbf{F}_{11} & \mathbf{F}_{12} \end{bmatrix}.
\end{align}

\noindent 3. $L^0 \mathbf{h}(\widetilde{\mathbf{x}})$ along $f_2$:
\begin{align}
\nabla L^1_{f_{2}} L^{0} \mathbf{h}(\widetilde{\mathbf{x}}) &= -\delta \mathbf{p}_{ijk} \mathbf{F}_0 \mathbf{\Xi}\{ \mathbf{q}^W_I \}.
\end{align}

Note that this is a 3x3 matrix with the first column being influenced by angular velocity along body x-axis ($\omega_x$). Similarly the second and third column are influenced by $\omega_y$ and $\omega_z$ respectively. Thus, the individual columns are stacked to form a 9x1 vector and the corresponding gradients are computed:
\begin{align}
\nabla L^1_{f_2} L^0 \mathbf{h}(\widetilde{\mathbf{x}}) &= \begin{bmatrix}
\nabla L^1_{f_2} L^0 \mathbf{h}(\widetilde{\mathbf{x}})[:,1] \\
\nabla L^1_{f_2} L^0 \mathbf{h}(\widetilde{\mathbf{x}})[:,2] \\
\nabla L^1_{f_2} L^0 \mathbf{h}(\widetilde{\mathbf{x}})[:,3]
\end{bmatrix},\\
&= \begin{bmatrix}
\mathbf{F}_{13} & \mathbf{0}_{9 \times 3} & \mathbf{F}_{14} & \mathbf{0}_{9 \times 3} & \mathbf{0}_{9 \times 3} & \mathbf{F}_{15}
\end{bmatrix}.
\end{align}

\noindent \textbf{Third order Lie derivatives}\\

\noindent 1. $L^1_{f_1} L^1_{f_0} \mathbf{h}(\widetilde{\mathbf{x}})$ along $f_0$:
\begin{align}
L^1_{f_0} L^1_{f_1} L^1_{f_0} \mathbf{h}(\widetilde{\mathbf{x}}) &= \nabla L^1_{f_1} L^1_{f_0} \mathbf{h}(\widetilde{\mathbf{x}}) \cdot f_0, \\
&= \mathbf{F}_5 \mathbf{v}^W_I - \dfrac{1}{2} \mathbf{F}_6 \mathbf{\Xi} \{ \mathbf{q}^W_I \} \mathbf{b}_w.
\end{align}

The corresponding gradient is:
\begin{align}
\nabla L^1_{f_0} L^1_{f_1} L^1_{f_0} \mathbf{h}(\widetilde{\mathbf{x}}) &= \begin{bmatrix}
\mathbf{F}_{16} & \mathbf{F}_5 & \mathbf{F}_{17} & \mathbf{0}_{9 \times 3} & \mathbf{F}_{18} & \mathbf{F}_{19}
\end{bmatrix}.
\end{align}

The observability matrix $\mathcal{O}$ is then constructed by stacking the Lie derivatives computed so far:
\begin{align*}
  \mathcal{O} = \begin{bmatrix}
	\nabla L^0 \mathbf{h}\\
	\nabla L^1_{f_0} \mathbf{h}\\
	\nabla L^2_{f_0} \mathbf{h}\\	
	\nabla L^1_{f_1}L^1_{f_0} \mathbf{h}\\
	\nabla L^1_{f_{2}} L^{0} \mathbf{h}\\
	\nabla L^1_{f_0} L^1_{f_1} L^1_{f_0} \mathbf{h}
	\end{bmatrix} =
    \begin{bmatrix}
    	-{\delta \mathbf{p}_{ijk}} & \mathbf{0}_{3 \times 3} & -{\delta \mathbf{p}_{ijk}} \mathbf{F}_0 & \mathbf{0}_{3 \times 3} & \mathbf{0}_{3 \times 3} & -{\delta \mathbf{p}_{ijk}} \mathbf{R}^W_I\\
		\mathbf{F}_1 & -\delta \mathbf{p}_{ijk} &  \mathbf{F}_2 &  \mathbf{0}_{3 \times 3} &  \delta \mathbf{p}_{ijk} \mathbf{F}_3 &  \mathbf{F}_4\\
		\mathbf{F}_8 &  \mathbf{F}_9 &  \mathbf{F}_{10} &  {\delta \mathbf{p}_{ijk}} \mathbf{R}^W_I &  \mathbf{F}_{11} &  \mathbf{F}_{12} \\
		\mathbf{F}_5 & \mathbf{0}_{9 \times 3} & \mathbf{F}_6 & \mathbf{0}_{9 \times 3} & \mathbf{0}_{9 \times 3} & \mathbf{F}_7\\
 		\mathbf{F}_{13} & \mathbf{0}_{9 \times 3} & \mathbf{F}_{14} & \mathbf{0}_{9 \times 3} & \mathbf{0}_{9 \times 3} & \mathbf{F}_{15} \\
		\mathbf{F}_{16} & \mathbf{F}_5 & \mathbf{F}_{17} & \mathbf{0}_{9 \times 3} & \mathbf{F}_{18} & \mathbf{F}_{19}		
\end{bmatrix}
\end{align*}

\subsubsection{Proof of local weak observability}
\begin{theorem} 
The observability matrix $\mathcal{O}$ associated with motion model (\ref{eqn:system_dynamics_affine}) and observation model (\ref{eqn:mul_anchor_meas_eq}) has full column rank if:
\begin{itemize}
\itemsep0em
\item[(C1)] at least three non-collinear anchors are available,
\item[(C2)] the mobile radio is non-coplanar with the three non-collinear anchors,
\item[(C3)] all three of $a_x, a_y$ and $a_z$ are excited and
\item[(C4)] all three of $\omega_x, \omega_y$ and $\omega_z$ are excited.
\end{itemize}
\label{thm:obs_cond}
\end{theorem}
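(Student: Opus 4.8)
The plan is to prove the claim by showing that the null space of $\mathcal{O}$ is trivial (up to the scaling freedom of the quaternion parameterization), via block Gaussian elimination on the six stacked Lie-derivative blocks, resolving the nineteen state directions in groups. Write a candidate null vector conformally with the column partition of $\mathcal{O}$ as $\mathbf{v}=(\mathbf{v}_p,\mathbf{v}_v,\mathbf{v}_q,\mathbf{v}_{ba},\mathbf{v}_{b\omega},\mathbf{v}_{pu})$, and recall the standard quaternion identity $\frac{\partial(\mathbf{R}\mathbf{a})}{\partial\mathbf{q}}\cdot\frac{1}{2}\bm{\Xi}\{\mathbf{q}\}=-\mathbf{R}[\mathbf{a}]_\times$, which applied to $\mathbf{R}^W_I\mathbf{p}^I_U$ gives $\mathbf{F}_0\bm{\Xi}\{\mathbf{q}^W_I\}=-2\mathbf{R}^W_I[\mathbf{p}^I_U]_\times$.

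\emph{Step 1 (geometric pivots; uses C1--C2).} First I would show that $\delta\mathbf{p}_{ijk}$ is invertible. Its rows are the radio-to-anchor vectors $\delta\mathbf{p}_i^T,\delta\mathbf{p}_j^T,\delta\mathbf{p}_k^T$; a linear dependence among $\delta\mathbf{p}_i,\delta\mathbf{p}_j,\delta\mathbf{p}_k$ would make the four points $\{\mathbf{p}^W_U,\mathbf{p}^W_i,\mathbf{p}^W_j,\mathbf{p}^W_k\}$ affinely dependent, contradicting (C1) (the anchors span a plane) together with (C2) (the radio is off that plane); equivalently, the three IMU-frame vectors $\mathbf{p}^I_m-\mathbf{p}^I_U$, $m\in\{i,j,k\}$, form a basis of $\mathbb{R}^3$. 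Since $\mathbf{R}^W_I$ is always invertible, the blocks $-\delta\mathbf{p}_{ijk}$ in the $\mathbf{p}^W_I$-slot of $\nabla L^0\mathbf{h}$, $-\delta\mathbf{p}_{ijk}$ in the $\mathbf{v}^W_I$-slot of $\nabla L^1_{f_0}\mathbf{h}$, and $\delta\mathbf{p}_{ijk}\mathbf{R}^W_I$ in the $\mathbf{b}_a$-slot of $\nabla L^2_{f_0}\mathbf{h}$ are invertible $3\times 3$ pivots, and each is untouched by the elimination of the earlier columns because the preceding rows vanish in the later slots. Using these as pivots, $\nabla L^0\mathbf{h}\cdot\mathbf{v}=0$ gives $\mathbf{v}_p=-\mathbf{F}_0\mathbf{v}_q-\mathbf{R}^W_I\mathbf{v}_{pu}$, while $\nabla L^1_{f_0}\mathbf{h}\cdot\mathbf{v}=0$ and $\nabla L^2_{f_0}\mathbf{h}\cdot\mathbf{v}=0$ express $\mathbf{v}_v$ and $\mathbf{v}_{ba}$ in terms of the remaining components (here one uses that $L^1_{f_0}\mathbf{h}$ and $L^2_{f_0}\mathbf{h}$ depend on $\mathbf{q}^W_I$ only through $\mathbf{R}^W_I$, so their gradients annihilate the scaling direction of $\mathbf{q}^W_I$). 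This disposes of $\mathbf{p}^W_I,\mathbf{v}^W_I,\mathbf{b}_a$ --- nine of the nineteen directions --- using only (C1)--(C2).

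\emph{Step 2 (orientation, lever arm, gyroscope bias; uses C3--C4).} It remains to force $\mathbf{v}_q$ into the scaling direction, $\mathbf{v}_{pu}=\mathbf{0}$ and $\mathbf{v}_{b\omega}=\mathbf{0}$; this is exactly where (C3)--(C4) enter, since the $n$-th column of $L^1_{f_1}L^1_{f_0}\mathbf{h}=-\delta\mathbf{p}_{ijk}\mathbf{R}^W_I$ and of $L^1_{f_2}L^0\mathbf{h}=2\,\delta\mathbf{p}_{ijk}\mathbf{R}^W_I[\mathbf{p}^I_U]_\times$ arises from the $n$-th control component and therefore contributes a row to $\mathcal{O}$ only when that accelerometer (resp. gyroscope) axis is excited; under (C3)--(C4) all nine entries of each are available. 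Substituting $\mathbf{v}_p=-\mathbf{F}_0\mathbf{v}_q-\mathbf{R}^W_I\mathbf{v}_{pu}$ into $\nabla L^1_{f_1}L^1_{f_0}\mathbf{h}\cdot\mathbf{v}=0$, the $\mathbf{v}_{pu}$ terms cancel identically and the $(m,n)$ equation collapses to $\bm{\delta}\cdot(\mathbf{1}_n\times(\mathbf{p}^I_U-\mathbf{p}^I_m))=0$, where $\bm{\delta}$ is the Lie-algebra part of $\mathbf{v}_q$; as $n$ ranges over $\{1,2,3\}$ these vectors span $(\mathbf{p}^I_U-\mathbf{p}^I_m)^\perp$, so $\bm{\delta}$ is parallel to each of the three independent vectors $\mathbf{p}^I_U-\mathbf{p}^I_m$, forcing $\bm{\delta}=\mathbf{0}$; hence $\mathbf{v}_q$ is in the scaling direction and $\mathbf{v}_p=-\mathbf{R}^W_I\mathbf{v}_{pu}$. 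With $\bm{\delta}=\mathbf{0}$ one then has $\dot{\mathbf{p}}^I_m={\mathbf{R}^W_I}^T(-\mathbf{v}_p)=\mathbf{v}_{pu}$, so differentiating $L^1_{f_2}L^0\mathbf{h}$ (whose $(m,n)$ entry is $-2(\mathbf{p}^I_U\times\mathbf{p}^I_m)_n$) reduces $\nabla L^1_{f_2}L^0\mathbf{h}\cdot\mathbf{v}=0$ to $\mathbf{v}_{pu}\times(\mathbf{p}^I_m-\mathbf{p}^I_U)=\mathbf{0}$ for $m\in\{i,j,k\}$, whence $\mathbf{v}_{pu}=\mathbf{0}$ and $\mathbf{v}_p=\mathbf{0}$. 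Feeding $\mathbf{v}_p=\mathbf{v}_{pu}=\mathbf{0}$ and $\bm{\delta}=\mathbf{0}$ back into $\nabla L^1_{f_0}\mathbf{h}\cdot\mathbf{v}=0$ yields $\mathbf{v}_v=\mathbf{F}_3\mathbf{v}_{b\omega}=\mathbf{R}^W_I(\mathbf{v}_{b\omega}\times\mathbf{p}^I_U)$, and $\nabla L^1_{f_0}L^1_{f_1}L^1_{f_0}\mathbf{h}\cdot\mathbf{v}=0$ then becomes ${\mathbf{R}^W_I}^T\mathbf{v}_v+\mathbf{p}^I_m\times\mathbf{v}_{b\omega}=\mathbf{0}$, i.e. $\mathbf{v}_{b\omega}\times(\mathbf{p}^I_U-\mathbf{p}^I_m)=\mathbf{0}$ for $m\in\{i,j,k\}$, so $\mathbf{v}_{b\omega}=\mathbf{0}$, hence $\mathbf{v}_v=\mathbf{0}$ and, from Step 1, $\mathbf{v}_{ba}=\mathbf{0}$. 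Thus $\ker\mathcal{O}$ is contained in the one-dimensional quaternion scaling direction; invoking the model's unit-norm constraint $\|\mathbf{q}^W_I\|=1$ (equivalently, appending $\nabla\|\mathbf{q}^W_I\|^2$ as a row of $\mathcal{O}$, or working in the minimal three-parameter orientation error) then gives the claimed full column rank and hence local weak observability.

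\emph{Main obstacle and remarks.} The delicate part is the bookkeeping in Step 2: eliminating the three ``easy'' columns modifies the later blocks $\mathbf{F}_5,\dots,\mathbf{F}_{19}$, so the cleanest route is \emph{not} to propagate those symbolic matrices but to differentiate the closed forms $-\delta\mathbf{p}_{ijk}\mathbf{R}^W_I$, $2\,\delta\mathbf{p}_{ijk}\mathbf{R}^W_I[\mathbf{p}^I_U]_\times$, and $\mathbf{F}_5\mathbf{v}^W_I-\frac{1}{2}\mathbf{F}_6\bm{\Xi}\{\mathbf{q}^W_I\}\mathbf{b}_\omega$ directly along the candidate null direction, repeatedly invoking ${\mathbf{R}^W_I}^T\mathbf{R}^W_I=\mathbf{I}$, the quaternion identity above, scalar triple-product rearrangements, and the fact that the three radio-to-anchor vectors are a basis of $\mathbb{R}^3$. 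One should also be explicit that every $L^k$ appearing here is a function of $\mathbf{R}^W_I$ rather than of $\mathbf{q}^W_I$ directly, so the scaling direction is the only possible residual null direction, and that (C1)--(C2) are used precisely for the invertibility and independence of the radio-to-anchor geometry while (C3)--(C4) are used precisely to furnish the full nine rows of the two first-order mixed Lie derivatives. A final remark is that with more non-collinear anchors the excitation requirements (C3)--(C4) can be weakened, but the stated version suffices.
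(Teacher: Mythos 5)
Your proposal is essentially correct and reaches the same conclusion, but by a genuinely different route. The paper performs explicit block Gaussian elimination on $\mathcal{O}$ and outsources the hard rank certificates to four lemmas whose content is symbolic: Mathematica-computed determinants of row-submatrices of $\mathbf{F}_5\mathbf{F}_0-\mathbf{F}_6$, $\mathbf{F}_{13}\mathbf{R}^W_I-\mathbf{F}_{15}$ and $\mathbf{F}_5\mathbf{F}_3+\mathbf{F}_{18}$, which are then shown to be nonvanishing products of the radio-to-anchor geometry terms of Lemma~1. You instead run a kernel-vector argument and replace those symbolic determinants with the identity $\mathbf{F}_0\bm{\Xi}\{\mathbf{q}\}=-2\mathbf{R}[\mathbf{p}^I_U]_\times$ plus triple-product manipulations, reducing each pivot to the statement that the three IMU-frame radio-to-anchor vectors form a basis of $\mathbb{R}^3$ --- this is more transparent, coordinate-free, and makes visible exactly where (C1)--(C4) enter, whereas the paper's route is verifiable only by rerunning the scripts. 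The one substantive divergence is the quaternion column: the paper claims the full $4$-column block is resolved (Lemma~2 gives rank $4$, hence total rank $19$), which is consistent only with the homogeneous, unnormalized map $\mathbf{q}\mapsto\mathbf{R}\{\mathbf{q}\}$; your assertion that the gradients ``annihilate the scaling direction'' is false under that convention (the directional derivative along $\mathbf{q}$ is $2\mathbf{R}\mathbf{a}\neq\mathbf{0}$), and is true only if one normalizes. Your fix --- restrict to the Lie-algebra part $\bm{\delta}$, show $\bm{\delta}=\mathbf{0}$, and append the unit-norm constraint --- is a standard and arguably cleaner resolution of the overparameterization, but you should state the convention explicitly so that Step~1's elimination of $\mathbf{v}_v$ and $\mathbf{v}_{ba}$ does not silently rely on the annihilation claim; under the paper's convention those pivots still go through, just with $\mathbf{v}_v,\mathbf{v}_{ba}$ expressed in terms of the full four-dimensional $\mathbf{v}_q$. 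With that convention made explicit, the remaining steps (forcing $\bm{\delta}=\mathbf{0}$ from the accelerometer-excited rows, $\mathbf{v}_{pu}=\mathbf{0}$ from the gyroscope-excited rows, and $\mathbf{v}_{b\omega}=\mathbf{0}$ from the third-order row) are sound and mirror, in kernel language, exactly the roles the paper assigns to Lemmas~2, 3 and~4.
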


\begin{proof}
We use block Gaussian elimination to prove that the matrix $\mathcal{O}$ has full column rank. The process of Gaussian Elimination followed here exploits the fact that \textit{both row and column operations} can be used interchangeably as the rank of the matrix is unaffected by such operations \cite{Lang}.

\subsubsection*{Step I}
The following two columns operations are performed:\\
Column 3 $\rightarrow$ Column 1 * $\mathbf{F}_0$ - Column 3\\
Column 6 $\rightarrow$ Column 1 * $\mathbf{R}^W_I$ - Column 6\\
Column 5 $\rightarrow$ Column 2 * $\mathbf{F}_3$ + Column 5.

\begin{align*}
  \mathcal{O} = 
    \begin{bmatrix}
    	-{\delta \mathbf{p}_{ijk}} & \mathbf{0}_{3 \times 3} & \mathbf{0}_{3 \times 4} & \mathbf{0}_{3 \times 3} & \mathbf{0}_{3 \times 3} & \mathbf{0}_{3 \times 3}\\
		\mathbf{F}_1 & -\delta \mathbf{p}_{ijk} &  \mathbf{F}_1 \mathbf{F}_0 - \mathbf{F}_2  &  \mathbf{0}_{3 \times 3} & \mathbf{0}_{3 \times 3}  &  \mathbf{F}_1  \mathbf{R}^W_I - \mathbf{F}_4\\
	    \mathbf{F}_8 &  \mathbf{F}_9 & \mathbf{F}_8 \mathbf{F}_0 - \mathbf{F}_{10} &  {\delta \mathbf{p}_{ijk}} \mathbf{R}^W_I &  \mathbf{F}_9 \mathbf{F}_3 - \mathbf{F}_{11} & \mathbf{F}_8  \mathbf{R}^W_I - \mathbf{F}_{12}\\
		\mathbf{F}_5 & \mathbf{0}_{9 \times 3} & \mathbf{F}_5  \mathbf{F}_0 - \mathbf{F}_6 & \mathbf{0}_{9 \times 3} & \mathbf{0}_{9 \times 3} & \mathbf{F}_5  \mathbf{R}^W_I - \mathbf{F}_7\\
		\mathbf{F}_{13} & \mathbf{0}_{9 \times 3}  & \mathbf{F}_{13}  \mathbf{F}_0 - \mathbf{F}_{14} & \mathbf{0}_{9 \times 3} & \mathbf{0}_{9 \times 3} & \mathbf{F}_{13}  \mathbf{R}^W_I - \mathbf{F}_{15}\\
		\mathbf{F}_{16} & \mathbf{F}_5 & \mathbf{F}_{16} \mathbf{F}_0 - \mathbf{F}_{17} & \mathbf{0}_{9 \times 3} & \mathbf{F}_5 \mathbf{F}_3 + \mathbf{F}_{18} & \mathbf{F}_{16}  \mathbf{R}^W_I  - \mathbf{F}_{19}
    \end{bmatrix}
\end{align*}

\subsubsection*{Step II}
Column 4 has full column rank contingent on ${\delta \mathbf{p}_{ijk}} \mathbf{R}^W_I$ being full rank. A matrix is full rank if its determinant is non-zero. Note that the determinant of a product of matrices is equal to the product of their individual determinants: for matrices $A$ and $B$, $|\mathbf{A}\mathbf{B} | = |\mathbf{A}||\mathbf{B}|$, where $| \cdot |$ denotes matrix determinant. Using this identity: 
\begin{align*}
|{\delta \mathbf{p}_{ijk}} \mathbf{R}^W_I| &= |{\delta \mathbf{p}_{ijk}}| |\mathbf{R}^W_I|
\end{align*}

The determinant of any valid rotation matrix is 1 and so $|\mathbf{R}^W_I| = 1$. The matrix $\delta \mathbf{p}_{ijk}$ has full column rank when the conditions mentioned in Lemma \ref{lem1} are satisfied. Under the conditions of Lemma \ref{lem1} column 4 and row 4 can be eliminated.
\begin{align*}
  \mathcal{O} = 
    \begin{bmatrix}
    	-{\delta \mathbf{p}_{ijk}} & \mathbf{0}_{3 \times 3} & \mathbf{0}_{3 \times 4} & \mathbf{0}_{3 \times 3} & \mathbf{0}_{3 \times 3} & \mathbf{0}_{3 \times 3}\\
		\mathbf{F}_1 & -\delta \mathbf{p}_{ijk} &  \mathbf{F}_1 \mathbf{F}_0 - \mathbf{F}_2  &  \mathbf{0}_{3 \times 3} & \mathbf{0}_{3 \times 3}  &  \mathbf{F}_1  \mathbf{R}^W_I - \mathbf{F}_4\\
	    \mathbf{0}_{3 \times 3} &  \mathbf{0}_{3 \times 3} & \mathbf{0}_{3 \times 4} &  \mathbf{I}_{3 \times 3} &  \mathbf{0}_{3 \times 3} & \mathbf{0}_{3 \times 3}\\
		\mathbf{F}_5 & \mathbf{0}_{9 \times 3} & \mathbf{F}_5  \mathbf{F}_0 - \mathbf{F}_6 & \mathbf{0}_{9 \times 3} & \mathbf{0}_{9 \times 3} & \mathbf{F}_5  \mathbf{R}^W_I - \mathbf{F}_7\\
		\mathbf{F}_{13} & \mathbf{0}_{9 \times 3}  & \mathbf{F}_{13}  \mathbf{F}_0 - \mathbf{F}_{14} & \mathbf{0}_{9 \times 3} & \mathbf{0}_{9 \times 3} & \mathbf{F}_{13}  \mathbf{R}^W_I - \mathbf{F}_{15}\\
		\mathbf{F}_{16} & \mathbf{F}_5 & \mathbf{F}_{16} \mathbf{F}_0 - \mathbf{F}_{17} & \mathbf{0}_{9 \times 3} & \mathbf{F}_5 \mathbf{F}_3 + \mathbf{F}_{18} & \mathbf{F}_{16}  \mathbf{R}^W_I  - \mathbf{F}_{19}
    \end{bmatrix}
\end{align*}

\subsubsection*{Step III}
Similarly, ${\delta \mathbf{p}_{ijk}}$ has full column rank and column 1 entries can be eliminated:
\begin{align*}
  \mathcal{O} = 
    \begin{bmatrix}
    	\mathbf{I}_{3 \times 3} & \mathbf{0}_{3 \times 3} & \mathbf{0}_{3 \times 4} & \mathbf{0}_{3 \times 3} & \mathbf{0}_{3 \times 3} & \mathbf{0}_{3 \times 3}\\
		\mathbf{0}_{3 \times 3} & -\delta \mathbf{p}_{ijk} &  \mathbf{F}_1 \mathbf{F}_0 - \mathbf{F}_2  &  \mathbf{0}_{3 \times 3} & \mathbf{0}_{3 \times 3}  &  \mathbf{F}_1  \mathbf{R}^W_I - \mathbf{F}_4\\
	    \mathbf{0}_{3 \times 3} &  \mathbf{0}_{3 \times 3} & \mathbf{0}_{3 \times 4} &  \mathbf{I}_{3 \times 3} &  \mathbf{0}_{3 \times 3} & \mathbf{0}_{3 \times 3}\\
		\mathbf{0}_{9 \times 3} & \mathbf{0}_{9 \times 3} & \mathbf{F}_5  \mathbf{F}_0 - \mathbf{F}_6 & \mathbf{0}_{9 \times 3} & \mathbf{0}_{9 \times 3} & \mathbf{F}_5  \mathbf{R}^W_I - \mathbf{F}_7\\
		\mathbf{0}_{9 \times 3} & \mathbf{0}_{9 \times 3}  & \mathbf{F}_{13}  \mathbf{F}_0 - \mathbf{F}_{14} & \mathbf{0}_{9 \times 3} & \mathbf{0}_{9 \times 3} & \mathbf{F}_{13}  \mathbf{R}^W_I - \mathbf{F}_{15}\\
		\mathbf{0}_{9 \times 3} & \mathbf{F}_5 & \mathbf{F}_{16} \mathbf{F}_0 - \mathbf{F}_{17} & \mathbf{0}_{9 \times 3} & \mathbf{F}_5 \mathbf{F}_3 + \mathbf{F}_{18} & \mathbf{F}_{16}  \mathbf{R}^W_I  - \mathbf{F}_{19}
    \end{bmatrix}
\end{align*}

\subsection*{Step IV}
Note that $\mathbf{F}_5  \mathbf{R}^W_I - \mathbf{F}_7$ is identically zero. Lemma \ref{lem2} gives the conditions under which $\mathbf{F}_5  \mathbf{F}_0 - \mathbf{F}_6$ has full column rank. Column 3 entries can be eliminated safely:
\begin{align*}
  \mathcal{O} = 
    \begin{bmatrix}
    	\mathbf{I}_{3 \times 3} & \mathbf{0}_{3 \times 3} & \mathbf{0}_{3 \times 4} & \mathbf{0}_{3 \times 3} & \mathbf{0}_{3 \times 3} & \mathbf{0}_{3 \times 3}\\
		\mathbf{0}_{3 \times 3} & -\delta \mathbf{p}_{ijk} &  \mathbf{0}_{3 \times 4}  &  \mathbf{0}_{3 \times 3} & \mathbf{0}_{3 \times 3}  &  \mathbf{F}_1  \mathbf{R}^W_I - \mathbf{F}_4\\
	    \mathbf{0}_{3 \times 3} &  \mathbf{0}_{3 \times 3} & \mathbf{0}_{3 \times 4} &  \mathbf{I}_{3 \times 3} &  \mathbf{0}_{3 \times 3} & \mathbf{0}_{3 \times 3}\\
		\mathbf{0}_{4 \times 3} & \mathbf{0}_{4 \times 3} & \mathbf{I}_{4 \times 4} & \mathbf{0}_{4 \times 3} & \mathbf{0}_{4 \times 3} & \mathbf{0}_{4 \times 3}\\
		\mathbf{0}_{9 \times 3} & \mathbf{0}_{9 \times 3}  & \mathbf{0}_{9 \times 4} & \mathbf{0}_{9 \times 3} & \mathbf{0}_{9 \times 3} & \mathbf{F}_{13}  \mathbf{R}^W_I - \mathbf{F}_{15}\\
		\mathbf{0}_{9 \times 3} & \mathbf{F}_5 & \mathbf{0}_{9 \times 4} & \mathbf{0}_{9 \times 3} & \mathbf{F}_5 \mathbf{F}_3 + \mathbf{F}_{18} & \mathbf{F}_{16}  \mathbf{R}^W_I  - \mathbf{F}_{19}
    \end{bmatrix}
\end{align*}

\subsection*{Step V}
$\mathbf{F}_{13}  \mathbf{R}^W_I - \mathbf{F}_{15}$ has full column rank under the conditions of Lemma \ref{lem3}. The entries of column and row 6 can be eliminated: 
\begin{align*}
  \mathcal{O} = 
    \begin{bmatrix}
    	\mathbf{I}_{3 \times 3} & \mathbf{0}_{3 \times 3} & \mathbf{0}_{3 \times 4} & \mathbf{0}_{3 \times 3} & \mathbf{0}_{3 \times 3} & \mathbf{0}_{3 \times 3}\\
		\mathbf{0}_{3 \times 3} & -\delta \mathbf{p}_{ijk} &  \mathbf{0}_{3 \times 4}  &  \mathbf{0}_{3 \times 3} & \mathbf{0}_{3 \times 3}  &  \mathbf{0}_{3 \times 3}\\
	    \mathbf{0}_{3 \times 3} &  \mathbf{0}_{3 \times 3} & \mathbf{0}_{3 \times 4} &  \mathbf{I}_{3 \times 3} &  \mathbf{0}_{3 \times 3} & \mathbf{0}_{3 \times 3}\\
		\mathbf{0}_{4 \times 3} & \mathbf{0}_{4 \times 3} & \mathbf{I}_{4 \times 4} & \mathbf{0}_{4 \times 3} & \mathbf{0}_{4 \times 3} & \mathbf{0}_{4 \times 3}\\
		\mathbf{0}_{3 \times 3} & \mathbf{0}_{3 \times 3} & \mathbf{0}_{3 \times 4} & \mathbf{0}_{3 \times 3} & \mathbf{0}_{3 \times 3} & \mathbf{I}_{3 \times 3}\\
		\mathbf{0}_{9 \times 3} & \mathbf{F}_5 & \mathbf{0}_{9 \times 4} & \mathbf{0}_{9 \times 3} & \mathbf{F}_5 \mathbf{F}_3 + \mathbf{F}_{18} & \mathbf{0}_{9 \times 3}
    \end{bmatrix}
\end{align*}

\subsection*{Step VI}
Under the conditions outlined in Lemma \ref{lem1}, $\delta \mathbf{p}_{ijk}$ is full rank, all entries of column 2 can be eliminated:
\begin{align*}
  \mathcal{O} = 
    \begin{bmatrix}
    	\mathbf{I}_{3 \times 3} & \mathbf{0}_{3 \times 3} & \mathbf{0}_{3 \times 4} & \mathbf{0}_{3 \times 3} & \mathbf{0}_{3 \times 3} & \mathbf{0}_{3 \times 3}\\
		\mathbf{0}_{3 \times 3} & \mathbf{I}_{3 \times 3} &  \mathbf{0}_{3 \times 4}  &  \mathbf{0}_{3 \times 3} & \mathbf{0}_{3 \times 3}  &  \mathbf{0}_{3 \times 3}\\
	    \mathbf{0}_{3 \times 3} &  \mathbf{0}_{3 \times 3} & \mathbf{0}_{3 \times 4} &  \mathbf{I}_{3 \times 3} &  \mathbf{0}_{3 \times 3} & \mathbf{0}_{3 \times 3}\\
		\mathbf{0}_{4 \times 3} & \mathbf{0}_{4 \times 3} & \mathbf{I}_{4 \times 4} & \mathbf{0}_{4 \times 3} & \mathbf{0}_{4 \times 3} & \mathbf{0}_{4 \times 3}\\
		\mathbf{0}_{3 \times 3} & \mathbf{0}_{3 \times 3} & \mathbf{0}_{3 \times 4} & \mathbf{0}_{3 \times 3} & \mathbf{0}_{3 \times 3} & \mathbf{I}_{3 \times 3}\\
		\mathbf{0}_{9 \times 3} & \mathbf{0}_{9 \times 3} & \mathbf{0}_{9 \times 4} & \mathbf{0}_{9 \times 3} & \mathbf{F}_5 \mathbf{F}_3 + \mathbf{F}_{18} & \mathbf{0}_{9 \times 3}
    \end{bmatrix}
\end{align*}

\subsection*{Step VII}
Finally, Lemma \ref{lem4} elicits the conditoins under which $\mathbf{F}_5 \mathbf{F}_3 + \mathbf{F}_{18}$ has full column rank:
\begin{align*}
  \mathcal{O} = 
    \begin{bmatrix}
    	\mathbf{I}_{3 \times 3} & \mathbf{0}_{3 \times 3} & \mathbf{0}_{3 \times 4} & \mathbf{0}_{3 \times 3} & \mathbf{0}_{3 \times 3} & \mathbf{0}_{3 \times 3}\\
		\mathbf{0}_{3 \times 3} & \mathbf{I}_{3 \times 3} &  \mathbf{0}_{3 \times 4}  &  \mathbf{0}_{3 \times 3} & \mathbf{0}_{3 \times 3}  &  \mathbf{0}_{3 \times 3}\\
	    \mathbf{0}_{3 \times 3} &  \mathbf{0}_{3 \times 3} & \mathbf{0}_{3 \times 4} &  \mathbf{I}_{3 \times 3} &  \mathbf{0}_{3 \times 3} & \mathbf{0}_{3 \times 3}\\
		\mathbf{0}_{4 \times 3} & \mathbf{0}_{4 \times 3} & \mathbf{I}_{4 \times 4} & \mathbf{0}_{4 \times 3} & \mathbf{0}_{4 \times 3} & \mathbf{0}_{4 \times 3}\\
		\mathbf{0}_{3 \times 3} & \mathbf{0}_{3 \times 3} & \mathbf{0}_{3 \times 4} & \mathbf{0}_{3 \times 3} & \mathbf{0}_{3 \times 3} & \mathbf{I}_{3 \times 3}\\
		\mathbf{0}_{3 \times 3} & \mathbf{0}_{3 \times 3} & \mathbf{0}_{3 \times 4} & \mathbf{0}_{3 \times 3} & \mathbf{I}_{3 \times 3} & \mathbf{0}_{3 \times 3}
    \end{bmatrix}
\end{align*}
\end{proof}

The above proof shows that as long as the conditions outlined in Theorem \ref{thm:obs_cond} are satisfied, the state $\widetilde{\mathbf{x}}$ is locally weakly observable.

\newpage

\newpage
\appendix
\section{Lemmas and their proofs}
\begin{lemma}: ${\delta \mathbf{p}_{ijk}}$ is full rank if:
\begin{itemize}
\item at least three non-collinear anchors $i,j$ and $k$ are available; and
\item the mobile radio is non-coplanar with the three non-collinear anchors.
\end{itemize}
\label{lem1}
\end{lemma}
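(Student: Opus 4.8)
The plan is to give $\delta \mathbf{p}_{ijk}$ a geometric meaning and then reduce ``full rank'' to a non-degenerate-volume condition. First I would note that, writing $\mathbf{p}^W_U := \mathbf{R}^W_I \mathbf{p}^I_U + \mathbf{p}^W_I$ for the position of the mobile radio in the world frame, each block row satisfies $\delta \mathbf{p}_m = \mathbf{p}^W_m - \mathbf{p}^W_U$ for $m \in \{i, j, k\}$. Hence $\delta \mathbf{p}_{ijk}$ is precisely the $3 \times 3$ matrix whose rows are the displacement vectors from the mobile radio to each of the three anchors. Since a square matrix has full rank if and only if its determinant is nonzero, and the determinant of a $3\times3$ matrix with rows $\delta\mathbf{p}_i^T, \delta\mathbf{p}_j^T, \delta\mathbf{p}_k^T$ equals the scalar triple product $\delta\mathbf{p}_i \cdot (\delta\mathbf{p}_j \times \delta\mathbf{p}_k)$, it suffices to show this triple product is nonzero under the two stated hypotheses.

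Next I would use the fact that $\delta\mathbf{p}_i \cdot (\delta\mathbf{p}_j \times \delta\mathbf{p}_k)$ is six times the signed volume of the tetrahedron with vertices $\mathbf{p}^W_U, \mathbf{p}^W_i, \mathbf{p}^W_j, \mathbf{p}^W_k$, and that this volume vanishes exactly when those four points are coplanar. I would prove the needed implication by contraposition: suppose the rows of $\delta\mathbf{p}_{ijk}$ are linearly dependent, say $\alpha_i \delta\mathbf{p}_i + \alpha_j \delta\mathbf{p}_j + \alpha_k \delta\mathbf{p}_k = \mathbf{0}$ with $(\alpha_i, \alpha_j, \alpha_k) \neq \mathbf{0}$. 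Setting $\alpha_0 := -(\alpha_i + \alpha_j + \alpha_k)$ and substituting $\delta\mathbf{p}_m = \mathbf{p}^W_m - \mathbf{p}^W_U$ yields $\alpha_0 \mathbf{p}^W_U + \alpha_i \mathbf{p}^W_i + \alpha_j \mathbf{p}^W_j + \alpha_k \mathbf{p}^W_k = \mathbf{0}$ with the four coefficients summing to zero and not all zero, i.e. the four points are affinely dependent and therefore coplanar --- contradicting the hypothesis that the mobile radio is non-coplanar with the three anchors. The converse (coplanarity forces the three difference vectors into a common two-dimensional subspace, hence linear dependence) is immediate, so the two statements are equivalent.

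Finally I would observe that the non-collinearity hypothesis rules out the even more degenerate sub-case in which $\delta\mathbf{p}_{ijk}$ has rank at most one: if $\mathbf{p}^W_i, \mathbf{p}^W_j, \mathbf{p}^W_k$ were collinear then, together with any $\mathbf{p}^W_U$, they would be coplanar, so non-collinearity is in fact implied by non-coplanarity and is kept in the statement only to make the geometric picture explicit. Assembling the pieces, non-coplanarity of the mobile radio with the three non-collinear anchors forces the scalar triple product, hence $\det \delta\mathbf{p}_{ijk}$, to be nonzero, so $\delta\mathbf{p}_{ijk}$ has full rank. I do not anticipate a genuine obstacle here; the only step that warrants care is the equivalence between linear independence of the three displacement vectors and non-coplanarity of the four points, which the coefficient-sum bookkeeping above makes precise rather than leaving it as ``obvious.''
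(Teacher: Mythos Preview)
Your argument is correct. You recognize that the rows of $\delta\mathbf{p}_{ijk}$ are the displacement vectors $\mathbf{p}^W_m - \mathbf{p}^W_U$, reduce full rank to nonvanishing of the scalar triple product, and identify that product with (six times) the signed volume of the tetrahedron on $\mathbf{p}^W_U,\mathbf{p}^W_i,\mathbf{p}^W_j,\mathbf{p}^W_k$. The affine-dependence bookkeeping you do is exactly the right way to make precise the equivalence ``linear dependence of the three difference vectors $\Leftrightarrow$ coplanarity of the four points,'' and your remark that non-collinearity is already forced by non-coplanarity is accurate.

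The paper takes a different, purely computational route: it places the three anchors in a convenient frame (anchor $i$ at the origin, anchor $j$ on the $y$-axis, anchor $k$ in the $xy$-plane), expands the determinant symbolically, and obtains $|\delta\mathbf{p}_{ijk}| = p^W_{jy}\,p^W_{kx}\cdot(\text{$z$-coordinate of } \mathbf{p}^W_U)$. Non-collinearity then forces the first two factors to be nonzero, and non-coplanarity forces the third. Your geometric argument is cleaner and coordinate-free; the paper's explicit computation, however, is not wasted effort there, because the same chosen frame and the resulting factorizations are reused verbatim in the proofs of the subsequent Lemmas~2--4, where several $3\times3$ and $4\times4$ minors are evaluated symbolically and shown to share the factor ``$z$-coordinate of $\mathbf{p}^W_U$.'' So the two approaches reach the same conclusion, but the paper's buys reusable intermediate formulas at the cost of a special coordinate choice, while yours gives a self-contained, conceptually transparent proof of this lemma in isolation.
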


\begin{proof}
The matrix ${\delta \mathbf{p}_{ijk}}$ is given by:
\begin{align*}
{\delta \mathbf{p}_{ijk}} &= \begin{bmatrix}
	\mathbf{p}^W_i - \mathbf{R}^W_I \mathbf{p}^I_U - \mathbf{p}^W_I \\
	\mathbf{p}^W_j - \mathbf{R}^W_I \mathbf{p}^I_U - \mathbf{p}^W_I \\
	\mathbf{p}^W_k - \mathbf{R}^W_I \mathbf{p}^I_U - \mathbf{p}^W_I 
	\end{bmatrix}\\
	&= \begin{bmatrix}
	\mathbf{p}^W_{ix} & \mathbf{p}^W_{iy} & \mathbf{p}^W_{iz} \\
	\mathbf{p}^W_{jx} & \mathbf{p}^W_{jy} & \mathbf{p}^W_{jz} \\
	\mathbf{p}^W_{kx} & \mathbf{p}^W_{ky} & \mathbf{p}^W_{kz}
	\end{bmatrix}
	\end{align*}
where $\mathbf{p}^W_{ix}$ denotes the difference between the x-coordinate of the $i^{th}$ anchor and the x-coordinate of the mobile radio expressed in the world frame, $W$. The determinant is:
\begin{align*}
|\delta \mathbf{p}_{ijk}| = p^W_{jy} p^W_{kx} (p^W_{Iz} + p^I_{Ux} (-2 q_0 q_2 + 2 q_1 q_3) + p^I_{Uy} (2 q_0 q_1 + 2 q_2 q_3) + p^I_{Uz} (q_0^2 - q_1^2 - q_2^2 + q_3^2))
\end{align*}

Note that $p^W_{jy}$ and $p^W_{kx}$ cannot be zero if the anchors are to be non-collinear. Thus, the determinant is zero if the third term vanishes. This term represents the  z-coordinate of the position of the mobile radio in the world frame. Thus, for the matrix $\delta  \mathbf{p}_{ijk}^T$ to have full rank, the position of mobile radio expressed in world frame cannot lie in the plane defined by the non-collinear anchors $\{ i,j,k \}$.
\end{proof}

\begin{lemma} For any unit-quaternion $\mathbf{q}$, the 9x4 matrix	$\mathbf{F}_5  \mathbf{F}_0 - \mathbf{F}_6$ has full column rank when at least two components of $\mathbf{a}_m = [a_x, a_y, a_z]^T$ are excited.
\label{lem2}
\end{lemma}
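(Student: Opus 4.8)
Here is how I would attack the statement.

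The plan is to reduce $\mathbf{F}_5\mathbf{F}_0-\mathbf{F}_6$ to a block matrix whose column rank can be read off directly from the quaternion parameterization. Recall that $\mathbf{F}_5$ and $\mathbf{F}_6$ are the $\mathbf{p}^W_I$- and $\mathbf{q}^W_I$-blocks of the gradient of the column-stacked form of $L^1_{f_1}L^1_{f_0}\mathbf{h}=-\delta\mathbf{p}_{ijk}\mathbf{R}^W_I$. Write $\mathbf{r}_b$ for the $b$-th column of $\mathbf{R}^W_I$ and set $\mathbf{G}_b:=\partial\mathbf{r}_b/\partial\mathbf{q}^W_I\in\mathbb{R}^{3\times4}$; the stacked scalar entries are $-\delta\mathbf{p}_a^T\mathbf{r}_b$ with $a\in\{i,j,k\}$ and $b\in\{1,2,3\}$, the $b$-th triple of rows being the one excited by $a_b$. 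First I would apply the product rule. Using $\partial\delta\mathbf{p}_a/\partial\mathbf{p}^W_I=-\mathbf{I}$ and $\partial\delta\mathbf{p}_a/\partial\mathbf{q}^W_I=-\mathbf{F}_0$, the gradient of $-\delta\mathbf{p}_a^T\mathbf{r}_b$ in $\mathbf{p}^W_I$ is $\mathbf{r}_b^T$ and its gradient in $\mathbf{q}^W_I$ is $\mathbf{r}_b^T\mathbf{F}_0-\delta\mathbf{p}_a^T\mathbf{G}_b$. Hence row $(a,b)$ of $\mathbf{F}_5$ is $\mathbf{r}_b^T$, so row $(a,b)$ of $\mathbf{F}_5\mathbf{F}_0$ is $\mathbf{r}_b^T\mathbf{F}_0$; the $\mathbf{r}_b^T\mathbf{F}_0$ contributions cancel, and row $(a,b)$ of $\mathbf{F}_5\mathbf{F}_0-\mathbf{F}_6$ is simply $\delta\mathbf{p}_a^T\mathbf{G}_b$. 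Since the nine rows are already ordered by body axis $b$, the axis-$b$ block is $\delta\mathbf{p}_{ijk}\mathbf{G}_b$.

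Under the hypotheses of Lemma~\ref{lem1} --- which are in force throughout the proof of Theorem~\ref{thm:obs_cond} --- the $3\times3$ matrix $\delta\mathbf{p}_{ijk}$ is invertible, so the row space of $\delta\mathbf{p}_{ijk}\mathbf{G}_b$ equals that of $\mathbf{G}_b$, and the claim reduces to showing that the row spaces of $\mathbf{G}_b$ over the excited axes span $\mathbb{R}^4$. The heart of the argument is the structure of $\mathbf{G}_b$. Differentiating the quaternion-to-rotation formula $\mathbf{R}^W_I=\mathbf{R}\{\mathbf{q}^W_I\}$ (which is quadratic in the components of $\mathbf{q}^W_I$, so $\mathbf{R}\{\lambda\mathbf{q}^W_I\}=\lambda^2\mathbf{R}^W_I$) along the curves $\tau\mapsto\mathbf{q}^W_I\otimes\exp(\tau\bar{\mathbf{e}}_c)$, $\bar{\mathbf{e}}_c:=(0,\mathbf{e}_c)$, and along $\tau\mapsto(1+\tau)\mathbf{q}^W_I$, gives $\mathbf{G}_b(\mathbf{q}^W_I\otimes\bar{\mathbf{e}}_c)=2\mathbf{R}^W_I(\mathbf{e}_c\times\mathbf{e}_b)$ and $\mathbf{G}_b\mathbf{q}^W_I=2\mathbf{r}_b$ --- equivalently, this follows from the right-perturbation identity $\mathbf{r}_b\mapsto\mathbf{r}_b-\mathbf{R}^W_I[\mathbf{e}_b]_\times\,\delta\bm{\theta}$. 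Because left multiplication by the unit quaternion $\mathbf{q}^W_I$ is orthogonal on $\mathbb{R}^4$, the four vectors $\mathbf{q}^W_I,\mathbf{q}^W_I\otimes\bar{\mathbf{e}}_1,\mathbf{q}^W_I\otimes\bar{\mathbf{e}}_2,\mathbf{q}^W_I\otimes\bar{\mathbf{e}}_3$ are orthonormal; evaluating $\mathbf{G}_b$ on this basis --- the images being $2\mathbf{r}_b$, $\mathbf{0}$, and $\pm2\mathbf{r}_c$ for the two indices $c\neq b$, which together span $\mathbb{R}^3$ --- shows $\mathbf{G}_b$ has full row rank $3$ with $\ker\mathbf{G}_b=\mathrm{span}(\mathbf{q}^W_I\otimes\bar{\mathbf{e}}_b)$, a single line; hence the row space of $\mathbf{G}_b$ is the $3$-dimensional orthogonal complement of that line.

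It remains to combine the axis blocks. For two distinct excited axes $b_1\neq b_2$, the lines $\ker\mathbf{G}_{b_1}$ and $\ker\mathbf{G}_{b_2}$ are spanned by the orthogonal vectors $\mathbf{q}^W_I\otimes\bar{\mathbf{e}}_{b_1}$ and $\mathbf{q}^W_I\otimes\bar{\mathbf{e}}_{b_2}$, hence meet only at $\mathbf{0}$; therefore the sum of their orthogonal complements is $(\ker\mathbf{G}_{b_1}\cap\ker\mathbf{G}_{b_2})^\perp=\mathbb{R}^4$. So any two excited axes already force $\mathbf{F}_5\mathbf{F}_0-\mathbf{F}_6$ to have full column rank, while a single excited axis contributes only a $3$-dimensional row space --- which is exactly why two are needed. (In the observability argument an unexcited body axis supplies no usable row, so the object whose rank matters is the submatrix built from the excited axis blocks.) I expect the main obstacle to be the first step --- organizing the gradient of the column-stacked, matrix-valued Lie derivative and spotting the $\mathbf{r}_b^T\mathbf{F}_0$ cancellation --- together with being careful about the quaternion convention: with the normalized rotation formula one would have $\mathbf{q}^W_I\in\ker\mathbf{G}_b$ for every $b$, every row space would then lie in $(\mathbf{q}^W_I)^\perp$, and the $9\times4$ matrix could never reach rank $4$, contradicting the statement.
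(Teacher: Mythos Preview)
Your argument is correct and takes a genuinely different route from the paper. The paper's proof is computational: it fixes a convenient anchor frame, uses a Mathematica script to expand several $4\times4$ minors of $\mathbf{F}_5\mathbf{F}_0-\mathbf{F}_6$ symbolically, observes that the resulting determinants factor as $p^W_{jy}\,p^W_{kx}\cdot p^W_{Uz}\cdot(\cdot)$, and then shows the remaining factors assemble into the vector $\mathbf{p}^I_U+{\mathbf{R}^W_I}^T\mathbf{p}^W_I$, which cannot vanish without violating Lemma~\ref{lem1}. Your approach instead identifies the structure directly: after the product-rule cancellation, the axis-$b$ block is $\delta\mathbf{p}_{ijk}\mathbf{G}_b$, and the whole question collapses to a clean quaternion-algebra fact about the kernels of the $\mathbf{G}_b$. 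What you gain is a coordinate-free proof that requires no symbolic computation and that \emph{explains} the ``two axes'' threshold (each $\mathbf{G}_b$ has a one-dimensional kernel along $\mathbf{q}^W_I\otimes\bar{\mathbf{e}}_b$, and any two such lines are orthogonal). What the paper's approach buys is that it never needs to articulate the quaternion-convention subtlety you flag at the end: the Mathematica expansion simply uses whatever quadratic formula for $\mathbf{R}\{\mathbf{q}\}$ the authors coded, and the nonvanishing minors certify rank $4$ directly. Your caveat is well taken --- with the normalized rotation map one would have $\mathbf{q}^W_I\in\ker\mathbf{G}_b$ for every $b$ and the matrix could reach rank at most $3$ --- so your proof is in fact sharper about the hypothesis under which the lemma holds.
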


\begin{proof}
To show that $\mathbf{F}_5  \mathbf{F}_0 - \mathbf{F}_6$ has full column rank, it is sufficient to show that the determinants of any combination of 4 rows cannot vanish simultaneously. The mathematica script used to calculate these determinants is provided in appendix B.1. The following determinants are considered:
\begin{align*}
\text{det}(1,2,3,4) &= -16 p^W_{jy} p^W_{kx} (p^I_{Uz} + p^W_{Iz} + 2 p^I_{Uy} q_0 q_1 - 2 p^I_{Uz} q_1^2 - 2 p^I_{Ux} q_0 q_2 - 
   2 p^I_{Uz} q_2^2 + 2 p^I_{Ux} q_1 q_3 + 2 p^I_{Uy} q_2 q_3)\\ & \qquad (p^I_{Uz} + p^W_{Iz} - 2 p^W_{Iy} q_0 q_1 - 
   2 p^W_{Iz} q_1^2 + 2 p^W_{Ix} q_0 q_2 - 2 p^W_{Iz} q_2^2 + 2 p^W_{Ix} q_1 q_3 + 2 p^W_{Iy} q_2 q_3) \\
\text{det}(1,2,3,7) &= 16 p^W_{jy} p^W_{kx} (p^I_{Uz} + p^W_{Iz} + 2 p^I_{Uy} q_0 q_1 - 2 p^I_{Uz} q_1^2 - 2 p^I_{Ux} q_0 q_2 - 
   2 p^I_{Uz} q_2^2 + 2 p^I_{Ux} q_1 q_3 + 2 p^I_{Uy} q_2 q_3)\\ & \qquad (p^I_{Uy} + p^W_{Iy} + 2 p^W_{Iz} q_0 q_1 - 
   2 p^W_{Iy} q_1^2 + 2 p^W_{Ix} q_1 q_2 - 2 p^W_{Ix} q_0 q_3 + 2 p^W_{Iz} q_2 q_3 - 2 p^W_{Iy} q_3^2)\\
\text{det}(4,5,6,7) &= -16 p^W_{jy} p^W_{kx} (p^I_{Uz} + p^W_{Iz} + 2 p^I_{Uy} q_0 q_1 - 2 p^I_{Uz} q_1^2 - 2 p^I_{Ux} q_0 q_2 - 
   2 p^I_{Uz} q_2^2 + 2 p^I_{Ux} q_1 q_3 + 2 p^I_{Uy} q_2 q_3)\\ & \qquad (p^I_{Ux} + p^W_{Ix} - 2 p^W_{Iz} q_0 q_2 + 
   2 p^W_{Iy} q_1 q_2 - 2 p^W_{Ix} q_2^2 + 2 p^W_{Iy} q_0 q_3 + 2 p^W_{Iz} q_1 q_3 - 2 p^W_{Ix} q_3^2)
\end{align*}

\noindent where $\mathbf{p}^W_j = [0, p^W_{jy}, 0]^T, ~\mathbf{p}^W_j = [p^W_{kx}, p^W_{ky}, 0]^T,  ~\mathbf{p}^I_U = [p^I_{Ux}, p^I_{Uy}, p^I_{Uz}]^T, ~\mathbf{p}^W_I = [p^W_{Ix}, p^W_{Iy}, p^W_{Iz}]^T$ and $\mathbf{q}^W_I = [q_0, q_1, q_2, q_3]^T$. The choice of determinants is governed by the corresponding accelerometer axis excitation. Specifically, $\text{det}(1,2,3,4)$ corresponds to excitation of \{$a_x, a_y$\}, $\text{det}(1,2,3,4)$ corresponds to excitation of \{$a_x, a_z$\} and 
$\text{det}(1,2,3,4)$ corresponds to excitation of \{$a_y, a_z$\}. Under the conditions outlined in Lemma \ref{lem1} (non-collinearity of anchors $i,j,k$), $p^W_{jy} $ and $p^W_{kx}$ cannot be zero. Consider the term:
\begin{align*}
p^W_{Uz} = p^I_{Uz} + p^W_{Iz} + 2 p^I_{Uy} q_0 q_1 - 2 p^I_{Uz} q_1^2 - 2 p^I_{Ux} q_0 q_2 - 
   2 p^I_{Uz} q_2^2 + 2 p^I_{Ux} q_1 q_3 + 2 p^I_{Uy} q_2 q_3
\end{align*}

This term represents the z-coordinate of the position of the mobile radio expressed in the world frame $W$. As per Lemma \ref{lem1}, this cannot be zero. The determinant terms then can be reduced to:
\begin{align*}
\text{det}(1,2,3,4) = p^I_{Uz} + p^W_{Iz} - 2 p^W_{Iy} q_0 q_1 - 
   2 p^W_{Iz} q_1^2 + 2 p^W_{Ix} q_0 q_2 - 2 p^W_{Iz} q_2^2 + 2 p^W_{Ix} q_1 q_3 + 2 p^W_{Iy} q_2 q_3 \\
\text{det}(1,2,3,7) = p^I_{Uy} + p^W_{Iy} + 2 p^W_{Iz} q_0 q_1 - 
   2 p^W_{Iy} q_1^2 + 2 p^W_{Ix} q_1 q_2 - 2 p^W_{Ix} q_0 q_3 + 2 p^W_{Iz} q_2 q_3 - 2 p^W_{Iy} q_3^2 \\
\text{det}(4,5,6,7) = p^I_{Ux} + p^W_{Ix} - 2 p^W_{Iz} q_0 q_2 + 
   2 p^W_{Iy} q_1 q_2 - 2 p^W_{Ix} q_2^2 + 2 p^W_{Iy} q_0 q_3 + 2 p^W_{Iz} q_1 q_3 - 2 p^W_{Ix} q_3^2 
\end{align*}

The terms on the right hand side can be expressed as:
\begin{align*}
\begin{bmatrix}
\text{det}(4,5,6,7)\\
\text{det}(1,2,3,7)\\
\text{det}(1,2,3,4)\\
\end{bmatrix} = \mathbf{p}^I_U + {\mathbf{R}^W_I}^T \mathbf{p}^W_I
\end{align*}

If the three determinants are identically zero then:
\begin{align*}
\mathbf{p}^I_U + {\mathbf{R}^W_I}^T \mathbf{p}^W_I = \mathbf{0}_{3 \times 1}
\end{align*}
Pre-multipliying by $\mathbf{R}^W_I$ and noting that $\mathbf{R}^W_I{\mathbf{R}^W_I}^T = \mathbf{I}_{3 \times 3}$:
\begin{align*}
\mathbf{R}^W_I ( \mathbf{p}^I_U + {\mathbf{R}^W_I}^T \mathbf{p}^W_I ) &= \mathbf{0}_{3 \times 1}\\
\mathbf{R}^W_I \mathbf{p}^I_U + \mathbf{R}^W_I {\mathbf{R}^W_I}^T \mathbf{p}^W_I) &= \mathbf{0}_{3 \times 1}\\
\mathbf{p}^W_I + \mathbf{R}^W_I \mathbf{p}^I_U &= \mathbf{0}_{3 \times 1}
\end{align*}

However, if $\mathbf{p}^W_I + \mathbf{R}^W_I \mathbf{p}^I_U = \mathbf{0}_{1 \times 3}$ then $\delta \mathbf{p}_{ijk}$ is rank deficient and conditions of Lemma \ref{lem1} are violated. Hence, the three determinants cannot be zero simultaneously. This inturn implies that $\mathbf{F}_5  \mathbf{F}_0 - \mathbf{F}_6$ has full column rank when at least two components of $\mathbf{a}_m$ are excited.
\end{proof}

\begin{lemma} For any unit-quaternion $\mathbf{q}$, the 9x3 matrix	$\mathbf{F}_{13}  \mathbf{R}^W_I - \mathbf{F}_{15}$ has full column rank when all three components of $\mathbf{\omega}_m = [\omega_x, \omega_y, \omega_z]^T$ are excited.
\label{lem3}
\end{lemma}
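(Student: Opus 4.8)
The plan is to reduce $\mathbf{F}_{13}\mathbf{R}^W_I - \mathbf{F}_{15}$ to an explicit block form and then read off full column rank from its kernel, invoking Lemma~\ref{lem1} just as Lemma~\ref{lem2} did. Recall that $\mathbf{F}_{13}$ and $\mathbf{F}_{15}$ are the $\mathbf{p}^W_I$- and $\mathbf{p}^I_U$-blocks of the gradient of the $9\times1$ vector obtained by stacking the columns of the $3\times3$ matrix $L^1_{f_2}L^0\mathbf{h} = -\frac{1}{2}\delta\mathbf{p}_{ijk}\mathbf{F}_0\mathbf{\Xi}\{\mathbf{q}^W_I\}$. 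The crucial simplification is the quaternion-kinematics identity $\frac{1}{2}\mathbf{F}_0\mathbf{\Xi}\{\mathbf{q}^W_I\} = -\mathbf{R}^W_I[\mathbf{p}^I_U]_\times$, obtained by equating the two expressions for $\frac{d}{dt}(\mathbf{R}^W_I\mathbf{p}^I_U)$: via the chain rule through $\mathbf{q}^W_I$ with $\dot{\mathbf{q}}^W_I = \frac{1}{2}\mathbf{\Xi}\{\mathbf{q}^W_I\}\bm{\omega}$, and directly with $\frac{d}{dt}(\mathbf{R}^W_I\mathbf{p}^I_U) = \mathbf{R}^W_I[\bm{\omega}]_\times\mathbf{p}^I_U = -\mathbf{R}^W_I[\mathbf{p}^I_U]_\times\bm{\omega}$. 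Thus, up to the harmless overall factor $1/2$, the $\ell$-th column of $L^1_{f_2}L^0\mathbf{h}$ is $\delta\mathbf{p}_{ijk}\mathbf{w}_\ell$ with $\mathbf{w}_\ell := \mathbf{R}^W_I[\mathbf{p}^I_U]_\times\mathbf{1}_\ell = -\mathbf{R}^W_I[\mathbf{1}_\ell]_\times\mathbf{p}^I_U$.

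Differentiating the stacked columns with respect to $\mathbf{p}^W_I$, where only $\delta\mathbf{p}_{ijk}$ varies and $\partial\delta\mathbf{p}_m/\partial\mathbf{p}^W_I = -\mathbf{I}$, produces the $\ell$-th $3\times3$ block of $\mathbf{F}_{13}$ with every row equal to $-\mathbf{w}_\ell^T$. Differentiating with respect to $\mathbf{p}^I_U$, where both $\delta\mathbf{p}_{ijk}$ and $\mathbf{w}_\ell$ depend on $\mathbf{p}^I_U$ through $\partial\delta\mathbf{p}_m/\partial\mathbf{p}^I_U = -\mathbf{R}^W_I$ and $\partial\mathbf{w}_\ell/\partial\mathbf{p}^I_U = -\mathbf{R}^W_I[\mathbf{1}_\ell]_\times$, produces the $\ell$-th block of $\mathbf{F}_{15}$ with $m$-th row $-\mathbf{w}_\ell^T\mathbf{R}^W_I - \delta\mathbf{p}_m^T\mathbf{R}^W_I[\mathbf{1}_\ell]_\times$. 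Consequently the $-\mathbf{w}_\ell^T\mathbf{R}^W_I$ terms cancel in $\mathbf{F}_{13}\mathbf{R}^W_I - \mathbf{F}_{15}$, and its $\ell$-th $3\times3$ block is simply $\delta\mathbf{p}_{ijk}\mathbf{R}^W_I[\mathbf{1}_\ell]_\times$, that is, it has $m$-th row $\delta\mathbf{p}_m^T\mathbf{R}^W_I[\mathbf{1}_\ell]_\times$ for $m\in\{i,j,k\}$ --- the same quantity that appeared in the gyroscope derivatives in the proof of Theorem~\ref{lem:td_cond_1}, without the contraction against $\mathbf{p}^I_U$ present there.

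To finish, suppose $\mathbf{u}\in\mathbb{R}^3$ lies in the kernel, so that $\delta\mathbf{p}_{ijk}\mathbf{R}^W_I[\mathbf{1}_\ell]_\times\mathbf{u} = \mathbf{0}$ for each of the three excited axes $\ell\in\{x,y,z\}$. Under the hypotheses of Lemma~\ref{lem1} the matrix $\delta\mathbf{p}_{ijk}$ is invertible and $\mathbf{R}^W_I$ is a rotation, so this forces $[\mathbf{1}_\ell]_\times\mathbf{u} = \mathbf{1}_\ell\times\mathbf{u} = \mathbf{0}$, that is, $\mathbf{u}$ is collinear with $\mathbf{1}_\ell$, for every $\ell$. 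A vector collinear with all three mutually orthogonal body axes must vanish, hence $\mathbf{u} = \mathbf{0}$ and $\mathbf{F}_{13}\mathbf{R}^W_I - \mathbf{F}_{15}$ has full column rank. Equivalently, in the minor-based style of Lemma~\ref{lem2}, one verifies directly from the reduced block form that the three columns are linearly independent: each $3\times3$ sub-block $\delta\mathbf{p}_{ijk}\mathbf{R}^W_I[\mathbf{1}_\ell]_\times$ has rank two, but combining sub-blocks for distinct excited axes restores rank three.

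The main obstacle is the first step: carrying out the gradient computations for $\mathbf{F}_{13}$ and $\mathbf{F}_{15}$ cleanly and recognizing the quaternion identity that collapses $\mathbf{F}_0\mathbf{\Xi}\{\mathbf{q}^W_I\}$ into $\mathbf{R}^W_I[\mathbf{p}^I_U]_\times$; once the cross-terms are seen to cancel, the rank conclusion is immediate. The non-collinearity and non-coplanarity assumptions enter only through Lemma~\ref{lem1}, while the role of exciting all of $\omega_x,\omega_y,\omega_z$ is precisely that all three $3\times3$ sub-blocks must be available for the kernel argument to collapse $\mathbf{u}$ to the origin.
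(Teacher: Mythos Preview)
Your argument is correct and takes a genuinely different route from the paper. The paper's proof is computational: it fixes a convenient coordinate frame for the anchors, uses Mathematica to expand nine $3\times 3$ minors of $\mathbf{F}_{13}\mathbf{R}^W_I-\mathbf{F}_{15}$, observes that each factors as $p^W_{jy}\,f_a\,g_b$, and then argues via Lemma~\ref{lem1} that neither the triple $(f_1,f_2,f_3)=\mathbf{p}^I_U+{\mathbf{R}^W_I}^T\mathbf{p}^W_I$ nor the triple $(g_1,g_2,g_3)$ can vanish simultaneously. Your proof instead exploits the quaternion-kinematics identity $\tfrac{1}{2}\mathbf{F}_0\mathbf{\Xi}\{\mathbf{q}^W_I\}=-\mathbf{R}^W_I[\mathbf{p}^I_U]_\times$ to reduce the $\ell$-th $3\times 3$ block of $\mathbf{F}_{13}\mathbf{R}^W_I-\mathbf{F}_{15}$ to the closed form $\delta\mathbf{p}_{ijk}\,\mathbf{R}^W_I\,[\mathbf{1}_\ell]_\times$, after which the kernel argument is immediate: any $\mathbf{u}$ annihilated by all three blocks must be collinear with each of $\mathbf{1}_x,\mathbf{1}_y,\mathbf{1}_z$, hence zero.

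What each approach buys: the paper's symbolic-minor method is mechanical and verifiable by CAS, but it is coordinate-dependent (it silently places anchor $i$ at the origin and $j$ on the $y$-axis) and obscures \emph{why} the rank is full. Your approach is coordinate-free, avoids any symbolic computation, and makes the geometric content transparent; it also shows, as a byproduct, that two excited gyroscope axes already suffice (since $\mathbf{u}\parallel\mathbf{1}_x$ and $\mathbf{u}\parallel\mathbf{1}_y$ force $\mathbf{u}=\mathbf{0}$), which is slightly stronger than the stated lemma. The only place you lean on the paper's framework is in invoking Lemma~\ref{lem1} to make $\delta\mathbf{p}_{ijk}$ invertible, exactly as the paper's own proof does.
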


\begin{proof}
 To show that $\mathbf{F}_{13}  \mathbf{R}^W_I - \mathbf{F}_{15}$ is full column rank, it is sufficient to show that determinants of any combination of 3 rows cannot vanish simultaneously. The mathematica script used to compute determinants is provided in appendix B.2. The following determinants are considered:
\begin{align}
\text{det}(1,2,4) &= p^W_{jy} ~ f_1 ~ g_1 \label{eqn:l3t1}\\
\text{det}(1,4,5) &= 2 ~ p^W_{jy} ~ f_1 ~ g_2 \label{eqn:l3t2}\\
\text{det}(1,4,8) &= p^W_{jy} ~ f_1 ~ g_3 \label{eqn:l3t3}\\
\text{det}(1,7,2) &= p^W_{jy} ~ f_2 ~ g_1\\
\text{det}(1,7,5) &= 2 ~ p^W_{jy} ~ f_2 ~ g_2\\
\text{det}(1,8,7) &= p^W_{jy} ~ f_2 ~ g_3\\
\text{det}(2,7,4) &= p^W_{jy} ~ f_3 ~ g_1\\
\text{det}(4,7,5) &= 2 ~ p^W_{jy} ~ f_3 ~ g_2\\
\text{det}(4,7,8) &= p^W_{jy} ~ f_3 ~ g_3
\end{align}

where
\begin{align*}
f_1 &= p^I_{Uz} + p^W_{Iz} - 2 p^W_{Iy} q_0 q_1 - 2 p^W_{Iz} q_1^2 + 2 p^W_{Ix} q_0 q_2 - 2 p^W_{Iz} q_2^2 + 2 p^W_{Ix} q_1 q_3 + 2 p^W_{Iy} q_2 q_3\\
f_2 &= p^I_{Uy} + p^W_{Iy} + 2 p^W_{Iz} q_0 q_1 - 2 p^W_{Iy} q_1^2 + 2 p^W_{Ix} q_1 q_2 - 2 p^W_{Ix} q_0 q_3 + 2 p^W_{Iz} q_2 q_3 - 2 p^W_{Iy} q_3^2\\
f_3 &= p^I_{Ux} + p^W_{Ix} - 2 p^W_{Iz} q_0 q_2 + 2 p^W_{Iy} q_1 q_2 - 2 p^W_{Ix} q_2^2 + 2 p^W_{Iy} q_0 q_3 + 2 p^W_{Iz} q_1 q_3 - 2 p^W_{Ix} q_3^2 \\
g_1 &= p^I_{Uz} + p^W_{Iz} + 2 p^I_{Uy} q_0 q_1 - 2 p^I_{Uz} q_1^2 + 2 p^W_{Ix} q_0 q_2 - 2 p^W_{Iz} q_2^2 - 2 p^W_{Ix} q_1 q_3 - 2 p^I_{Uy} q_2 q_3 - 2 p^I_{Uz} q_3^2 - 2 p^W_{Iz} q_3^2 \\
g_2 &= p^I_{Ux} q_0 q_1 + p^W_{Ix} q_0 q_1 + p^I_{Uz} q_1 q_2 - p^W_{Iz} q_1 q_2 + p^I_{Uz} q_0 q_3 + p^W_{Iz} q_0 q_3 - p^I_{Ux} q_2 q_3 + p^W_{Ix} q_2 q_3 \\
g_3 &= p^I_{Ux} + p^W_{Ix} - 2 p^I_{Ux} q_1^2 - 2 p^W_{Ix} q_1^2 - 
   2 p^W_{Iz} q_0 q_2 - 2 p^I_{Uy} q_1 q_2 - 2 p^W_{Ix} q_2^2 - 2 p^I_{Uy} q_0 q_3 - 2 p^W_{Iz} q_1 q_3 - 2 p^I_{Ux} q_3^2 
\end{align*}

Terms $f_1, f_2, f_3$ can be written as:
\begin{align*}
\begin{bmatrix}
f_1\\
f_2\\
f_3  
\end{bmatrix} &= \mathbf{p}^I_U + {\mathbf{R}^W_I}^T \mathbf{p}^W_I
\end{align*}

If the three terms are identically zero then:
\begin{align*}
\mathbf{p}^I_U + {\mathbf{R}^W_I}^T \mathbf{p}^W_I = \mathbf{0}_{3 \times 1}
\end{align*}
Pre-multipliying by $\mathbf{R}^W_I$ and noting that $\mathbf{R}^W_I{\mathbf{R}^W_I}^T = \mathbf{I}_{3 \times 3}$:
\begin{align*}
\mathbf{R}^W_I ( \mathbf{p}^I_U + {\mathbf{R}^W_I}^T \mathbf{p}^W_I ) &= \mathbf{0}_{3 \times 1}\\
\mathbf{R}^W_I \mathbf{p}^I_U + \mathbf{R}^W_I {\mathbf{R}^W_I}^T \mathbf{p}^W_I) &= \mathbf{0}_{3 \times 1}\\
\mathbf{p}^W_I + \mathbf{R}^W_I \mathbf{p}^I_U &= \mathbf{0}_{3 \times 1}
\end{align*}

However, if $\mathbf{p}^W_I + \mathbf{R}^W_I \mathbf{p}^I_U = \mathbf{0}_{1 \times 3}$ then $\delta \mathbf{p}_{ijk}$ is rank deficient and conditions of Lemma \ref{lem1} are violated. Hence, $f_1, f_2$ and $f_3$ cannot vanish simultaneously. Without loss of generality, it is assumed that $f_1$ does not vanish. With this assumption, the analysis is resitricted to $\text{det}(1,2,4), \text{det}(1,4,5)$ and  $\text{det}(1,4,8)$. Note that $\text{det}(1,2,4)$ corresponds to excitation of $\{ \omega_x, \omega_y\}$, $\text{det}(1,4,5)$ corresponds to excitation of $\{ \omega_x, \omega_y \}$ and  $\text{det}(1,4,8)$ corresponds to excitation of $\{ \omega_x, \omega_y, \omega_z \}$, respectively. Since $f_1$ does not vanish and $p^W_{jy}$ cannot be zero (to satisfy the constraint that the anchors be non-collinear as per Lemma \ref{lem1}), the only way (\ref{eqn:l3t1})-(\ref{eqn:l3t3}) vanish simultaneously is if $g_1, g_2$ and $g_3$ vanish simultaneously. Using the constraint of a unit-quaternion, $q_0^2 + q_1^2 + q_2^2 + q_3^2 = 1$, Terms $g_1, g_2$ and $g_3$ can be rearranged to generate a new constraint:
\begin{align}
p^W_{Iz} + p^I_{Ux} (-2 q_0 q_2 + 2 q_1 q_3) + p^I_{Uy} (2 q_0 q_1 + 2 q_2 q_3) + 
 p^I_{Uz} (q_0^2 - q_1^2 - q_2^2 + q_3^2) = 0
\end{align}

This represents the z-coordinate of the position of the mobile radio in the world frame and cannot be zero as per the constraints of Lemma \ref{lem1}. Hence, the terms $g_1, g_2$ and $g_3$ cannot vanish simultaneously if the constraints of Lemma \ref{lem1} are satisfied. This implies that (\ref{eqn:l3t1})-(\ref{eqn:l3t3}) cannot be zero simultaneously and hence $\mathbf{F}_{13}  \mathbf{R}^W_I - \mathbf{F}_{15}$ has full column rank when all three components of $\mathbf{\omega}_m = [\omega_x, \omega_y, \omega_z]^T$ are excited.
\end{proof}

\begin{lemma} For any unit-quaternion $\mathbf{q}$, the 9x3 matrix	$\mathbf{F}_5  \mathbf{F}_3 + \mathbf{F}_{18}$ has full column rank when all three components of $\mathbf{a}_m = [a_x, a_y, a_z]^T$ are excited.
\label{lem4}
\end{lemma}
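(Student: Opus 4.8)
The plan is to mirror the strategy used for Lemmas \ref{lem2} and \ref{lem3}. Since $\mathbf{F}_5 \mathbf{F}_3 + \mathbf{F}_{18}$ is a $9 \times 3$ matrix, it has full column rank if and only if at least one of its order-$3$ minors is nonzero; equivalently, it is rank deficient only if every $3 \times 3$ determinant formed from three of its rows vanishes. Because the nine rows are grouped by accelerometer axis --- the first three rows carrying excitation of $a_x$, the next three of $a_y$, and the last three of $a_z$ --- I would pick a small family of row triples whose determinants jointly witness the excitations of $\{a_x, a_y\}$, $\{a_x, a_z\}$, and $\{a_y, a_z\}$, so that showing this family cannot vanish simultaneously, under the hypothesis that all three axes are excited, establishes full column rank.

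The first computational step is to evaluate the chosen $3 \times 3$ determinants symbolically (the supporting Mathematica script would be placed in an appendix, as for Lemmas \ref{lem2}--\ref{lem3}), after inserting the reduced anchor coordinates $\mathbf{p}^W_j = [0, p^W_{jy}, 0]^T$ and $\mathbf{p}^W_k = [p^W_{kx}, p^W_{ky}, 0]^T$ used throughout. I expect each determinant to factor into (i) the non-collinearity terms $p^W_{jy}$ and/or $p^W_{kx}$, which are nonzero under Lemma \ref{lem1}; (ii) the quantity $p^W_{Uz} = p^I_{Uz} + p^W_{Iz} + 2 p^I_{Uy} q_0 q_1 - 2 p^I_{Uz} q_1^2 - 2 p^I_{Ux} q_0 q_2 - 2 p^I_{Uz} q_2^2 + 2 p^I_{Ux} q_1 q_3 + 2 p^I_{Uy} q_2 q_3$, the $z$-coordinate of the mobile radio in the world frame, again nonzero by Lemma \ref{lem1}; and (iii) residual factors that, as in Lemma \ref{lem3}, assemble either into the components of the vector $\mathbf{p}^I_U + {\mathbf{R}^W_I}^T \mathbf{p}^W_I$ or into further polynomials that reduce to $p^W_{Uz}$ once the unit-quaternion constraint $q_0^2 + q_1^2 + q_2^2 + q_3^2 = 1$ is imposed.

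After cancelling factors (i) and (ii), the only way the selected determinants can vanish together is for the residual factors in (iii) to vanish simultaneously, and I would close the argument exactly as in Lemma \ref{lem3}: if $\mathbf{p}^I_U + {\mathbf{R}^W_I}^T \mathbf{p}^W_I = \mathbf{0}$, pre-multiplying by $\mathbf{R}^W_I$ and using $\mathbf{R}^W_I {\mathbf{R}^W_I}^T = \mathbf{I}$ yields $\mathbf{p}^W_I + \mathbf{R}^W_I \mathbf{p}^I_U = \mathbf{0}$, which forces every row of $\delta \mathbf{p}_{ijk}$ to equal the corresponding anchor position and makes $\delta \mathbf{p}_{ijk}$ rank deficient, contradicting Lemma \ref{lem1}; any surviving polynomial of the $p^W_{Uz}$ type is ruled out the same way. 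Hence the chosen determinants cannot all vanish, and $\mathbf{F}_5 \mathbf{F}_3 + \mathbf{F}_{18}$ has full column rank whenever $a_x$, $a_y$, and $a_z$ are all excited.

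The principal difficulty I anticipate is bookkeeping rather than conceptual. One has to unwind the definitions of $\mathbf{F}_5$ (built from $\nabla L^1_{f_1} L^1_{f_0} \mathbf{h}$) and $\mathbf{F}_{18}$ (built from $\nabla L^1_{f_0} L^1_{f_1} L^1_{f_0} \mathbf{h}$), track the Jacobian blocks through the earlier Lie-derivative computations, and then verify that the symbolic $3 \times 3$ determinants really do factor through $p^W_{Uz}$ and $\mathbf{p}^I_U + {\mathbf{R}^W_I}^T \mathbf{p}^W_I$ with no extra common factor that could vanish on an admissible configuration. One must also confirm that the three accelerometer axes each govern a distinct triple of rows, so that the hypothesis ``all three of $a_x, a_y, a_z$ are excited'' is genuinely what is needed (rather than two, as in Lemma \ref{lem2}). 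Everything after the factorization is the short rotation-matrix argument already used twice above.
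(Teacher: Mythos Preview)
Your proposal is correct and mirrors the paper's proof almost exactly; in fact the paper's argument for Lemma~\ref{lem4} is a verbatim rerun of Lemma~\ref{lem3} (same nine determinant choices, same $f_i,g_j$ factorization, same two-stage contradiction with Lemma~\ref{lem1}), with $\bm{\omega}_m$ replaced by $\mathbf{a}_m$. One small correction to your expectations: the quantity $p^W_{Uz}$ does \emph{not} appear as a standalone common factor (your item~(ii)) --- the minors factor only as $p^W_{jy}\cdot f_i\cdot g_j$, and $p^W_{Uz}$ emerges solely from the $g_j$ via the unit-quaternion constraint, exactly as you already describe under~(iii).
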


\begin{proof}
To show that $\mathbf{F}_5  \mathbf{F}_3 + \mathbf{F}_{18}$ has full column rank, it is sufficient to show that the determinants of any combination of 4 rows cannot vanish simultaneously. The corresponding mathematica script used to compute determinants is provided in appendix B.3. The following determinants are considered:
\begin{align}
\text{det}(1,2,4) &= -p^W_{jy} ~ f_1 ~ g_1 \label{eqn:l4t1}\\
\text{det}(1,4,5) &= -2 ~ p^W_{jy} ~ f_1 ~ g_2 \label{eqn:l4t2}\\
\text{det}(1,4,8) &= -p^W_{jy} ~ f_1 ~ g_3 \label{eqn:l4t3}\\
\text{det}(1,7,2) &= -p^W_{jy} ~ f_2 ~ g_1\\
\text{det}(1,5,7) &= -2 ~ p^W_{jy} ~ f_2 ~ g_2\\
\text{det}(1,8,7) &= -p^W_{jy} ~ f_2 ~ g_3\\
\text{det}(2,7,4) &= -p^W_{jy} ~ f_3 ~ g_1\\
\text{det}(4,7,5) &= -2 ~ p^W_{jy} ~ f_3 ~ g_2\\
\text{det}(4,7,8) &= -p^W_{jy} ~ f_3 ~ g_3
\end{align}
where
\begin{align*}
f_1 &= p^I_{Uz} + p^W_{Iz} - 2 p^W_{Iy} q_0 q_1 - 2 p^W_{Iz} q_1^2 + 2 p^W_{Ix} q_0 q_2 - 2 p^W_{Iz} q_2^2 + 2 p^W_{Ix} q_1 q_3 + 2 p^W_{Iy} q_2 q_3\\
f_2 &= p^I_{Uy} + p^W_{Iy} + 2 p^W_{Iz} q_0 q_1 - 2 p^W_{Iy} q_1^2 + 2 p^W_{Ix} q_1 q_2 - 2 p^W_{Ix} q_0 q_3 + 2 p^W_{Iz} q_2 q_3 - 2 p^W_{Iy} q_3^2\\
f_3 &= p^I_{Ux} + p^W_{Ix} - 2 p^W_{Iz} q_0 q_2 + 2 p^W_{Iy} q_1 q_2 - 2 p^W_{Ix} q_2^2 + 2 p^W_{Iy} q_0 q_3 + 2 p^W_{Iz} q_1 q_3 - 2 p^W_{Ix} q_3^2 \\
g_1 &= p^I_{Uz} + p^W_{Iz} + 2 p^I_{Uy} q_0 q_1 - 2 p^I_{Uz} q_1^2 + 2 p^W_{Ix} q_0 q_2 - 2 p^W_{Iz} q_2^2 - 2 p^W_{Ix} q_1 q_3 - 2 p^I_{Uy} q_2 q_3 - 2 p^I_{Uz} q_3^2 - 2 p^W_{Iz} q_3^2 \\
g_2 &= p^I_{Ux} q_0 q_1 + p^W_{Ix} q_0 q_1 + p^I_{Uz} q_1 q_2 - p^W_{Iz} q_1 q_2 + p^I_{Uz} q_0 q_3 + p^W_{Iz} q_0 q_3 - p^I_{Ux} q_2 q_3 + p^W_{Ix} q_2 q_3 \\
g_3 &= p^I_{Ux} + p^W_{Ix} - 2 p^I_{Ux} q_1^2 - 2 p^W_{Ix} q_1^2 - 
   2 p^W_{Iz} q_0 q_2 - 2 p^I_{Uy} q_1 q_2 - 2 p^W_{Ix} q_2^2 - 2 p^I_{Uy} q_0 q_3 - 2 p^W_{Iz} q_1 q_3 - 2 p^I_{Ux} q_3^2 
\end{align*}

Terms $f_1, f_2, f_3$ can be written as:
\begin{align*}
\begin{bmatrix}
f_1\\
f_2\\
f_3  
\end{bmatrix} &= \mathbf{p}^I_U + {\mathbf{R}^W_I}^T \mathbf{p}^W_I
\end{align*}

If the three terms are identically zero then:
\begin{align*}
\mathbf{p}^I_U + {\mathbf{R}^W_I}^T \mathbf{p}^W_I = \mathbf{0}_{3 \times 1}
\end{align*}
Pre-multipliying by $\mathbf{R}^W_I$ and noting that $\mathbf{R}^W_I{\mathbf{R}^W_I}^T = \mathbf{I}_{3 \times 3}$:
\begin{align*}
\mathbf{R}^W_I ( \mathbf{p}^I_U + {\mathbf{R}^W_I}^T \mathbf{p}^W_I ) &= \mathbf{0}_{3 \times 1}\\
\mathbf{R}^W_I \mathbf{p}^I_U + \mathbf{R}^W_I {\mathbf{R}^W_I}^T \mathbf{p}^W_I) &= \mathbf{0}_{3 \times 1}\\
\mathbf{p}^W_I + \mathbf{R}^W_I \mathbf{p}^I_U &= \mathbf{0}_{3 \times 1}
\end{align*}

However, if $\mathbf{p}^W_I + \mathbf{R}^W_I \mathbf{p}^I_U = \mathbf{0}_{1 \times 3}$ then $\delta \mathbf{p}_{ijk}$ is rank-deficient and conditions of Lemma \ref{lem1} are violated. Hence, $f_1, f_2$ and $f_3$ cannot vanish simultaneously. Without loss of generality, it is assumed that $f_1$ does not vanish. With this assumption, the analysis is resitricted to $\text{det}(1,2,4), \text{det}(1,4,5)$ and  $\text{det}(1,4,8)$. Note that $\text{det}(1,2,4)$ corresponds to excitation of $\{ a_x, a_y\}$, $\text{det}(1,4,5)$ corresponds to excitation of $\{ a_x, a_y \}$ and  $\text{det}(1,4,8)$ corresponds to excitation of $\{ a_x, a_y, a_z \}$ respectively. Since $f_1$ does not vanish and $p^W_{jy}$ cannot be zero (to satisfy the constraint that the anchors be non-collinear as per Lemma \ref{lem1}), the only way (\ref{eqn:l4t1})-(\ref{eqn:l4t3}) vanish simultaneously is if $g_1, g_2$ and $g_3$ vanish simultaneously. Using the constraint of a unit- quaternion, $q_0^2 + q_1^2 + q_2^2 + q_3^2 = 1$, the terms $g_1, g_2$ and $g_3$ can be rearranged to generate a new constraint:
\begin{align}
p^W_{Iz} + p^I_{Ux} (-2 q_0 q_2 + 2 q_1 q_3) + p^I_{Uy} (2 q_0 q_1 + 2 q_2 q_3) + 
 p^I_{Uz} (q_0^2 - q_1^2 - q_2^2 + q_3^2) = 0.
\end{align}

This represents the z-coordinate of the position of the mobile radio in the world frame and cannot be zero as per constraints of Lemma \ref{lem1}. Hence the terms $g_1, g_2$ and $g_3$ cannot vanish simultaneously if the constraints of Lemma \ref{lem1} are satisfied. This implies that (\ref{eqn:l4t1})-(\ref{eqn:l4t3}) cannot be zero simultaneously and hence $\mathbf{F}_{5}  \mathbf{F}_3 + \mathbf{F}_{18}$ has full column rank when all three components of $\mathbf{a}_m = [a_x, a_y, a_z]^T$ are excited.
\end{proof}

\includepdf[pages=1, offset=75 -80, pagecommand={ 
\vspace*{-8em} 
\section{Mathematica scripts} 
\subsection{Script I} \thispagestyle{empty}}]{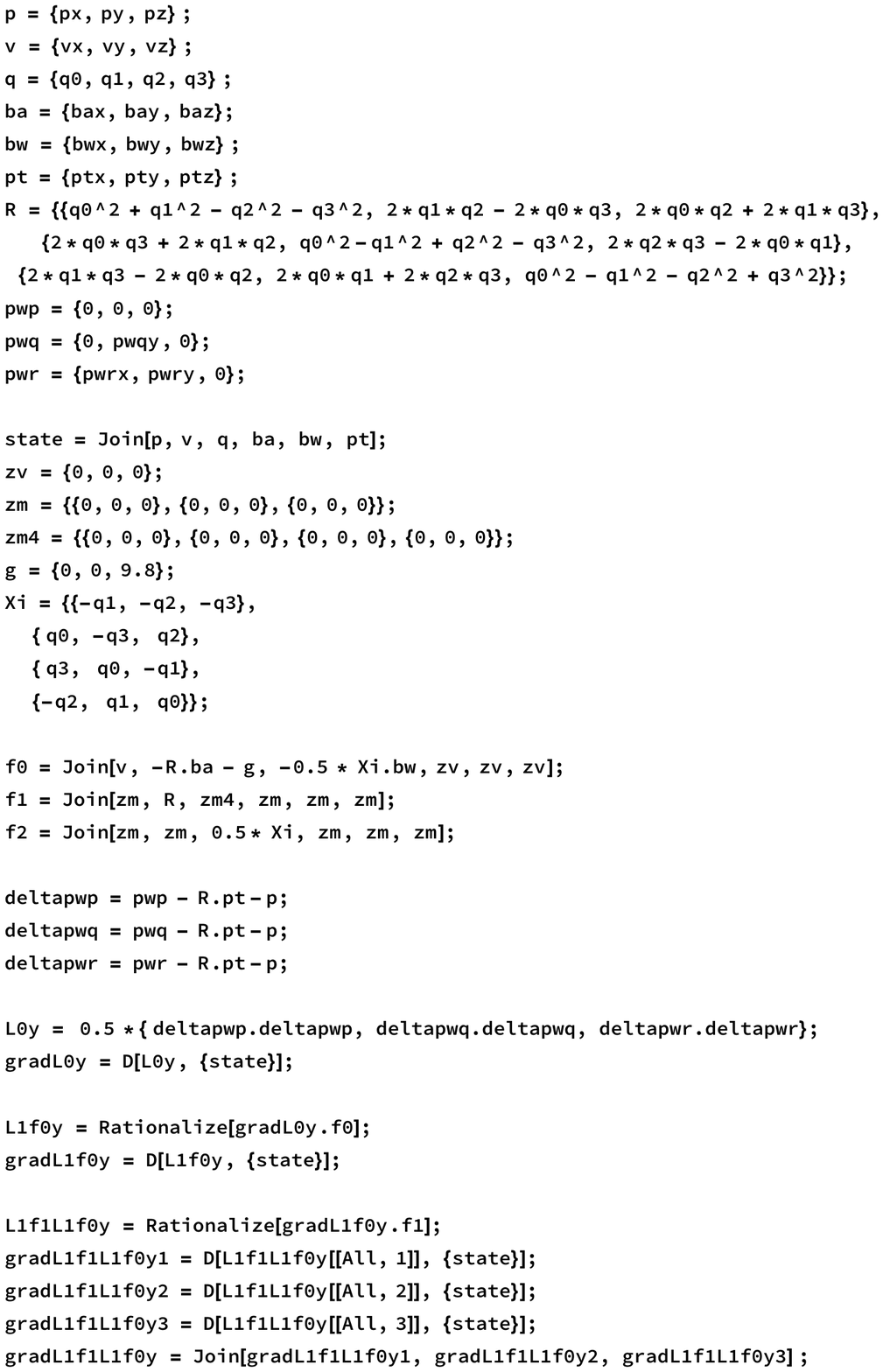}
\includepdf[pages=2, offset=75 -75, pagecommand={\thispagestyle{empty}}, fitpaper=true]{scripts/rot_mat.pdf}

\includepdf[pages=1, offset=75 -75, pagecommand={ \vspace*{-6em} \subsection{Script II} \thispagestyle{empty}}, fitpaper=true]{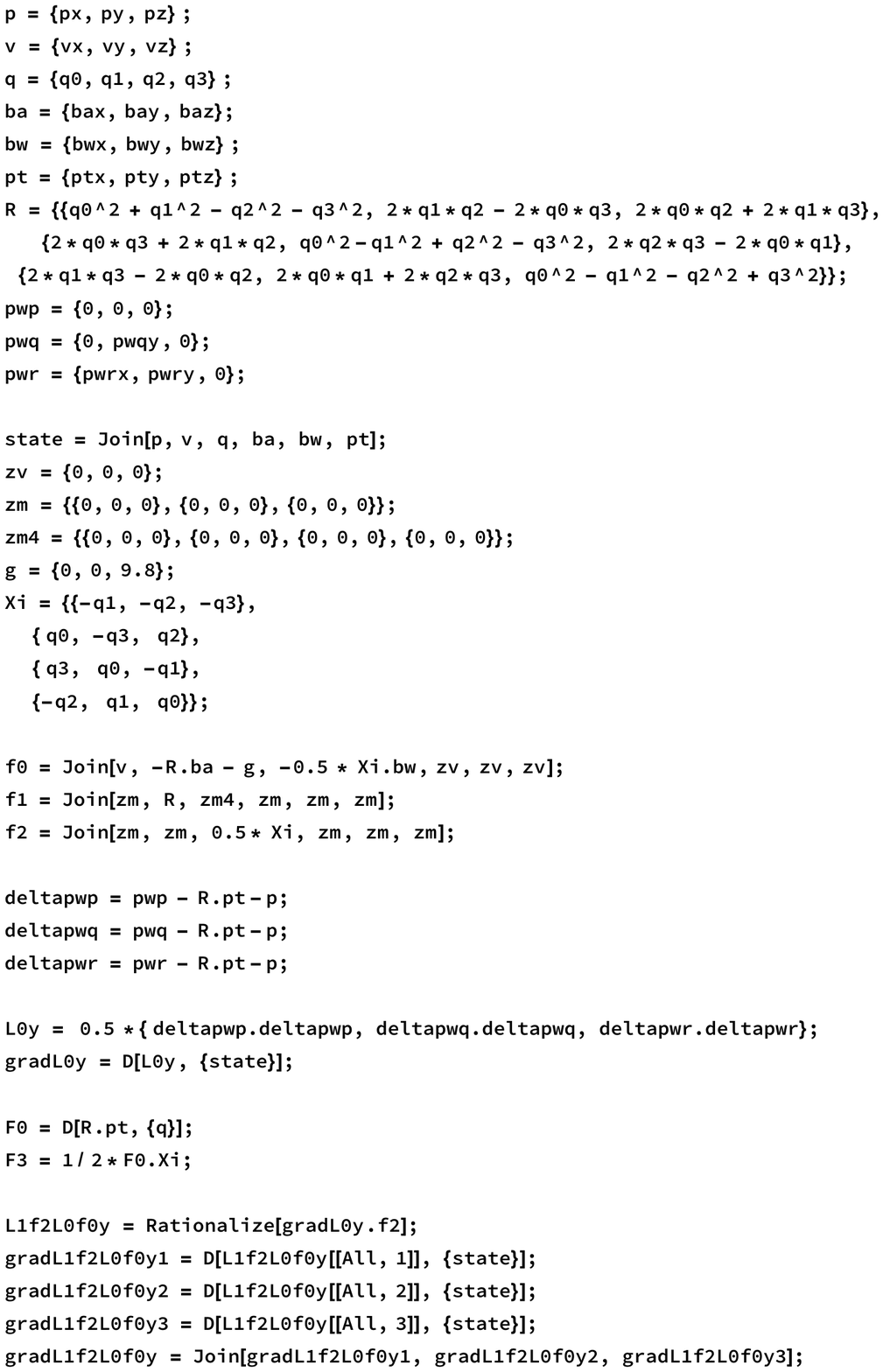}
\includepdf[pages=2, offset=75 -75, pagecommand={\thispagestyle{empty}}, fitpaper=true]{scripts/lev_arm_mat.pdf}

\includepdf[pages=1, offset=75 -75, pagecommand={\vspace*{-6em} \subsection{Script III} \thispagestyle{empty}}, fitpaper=true]{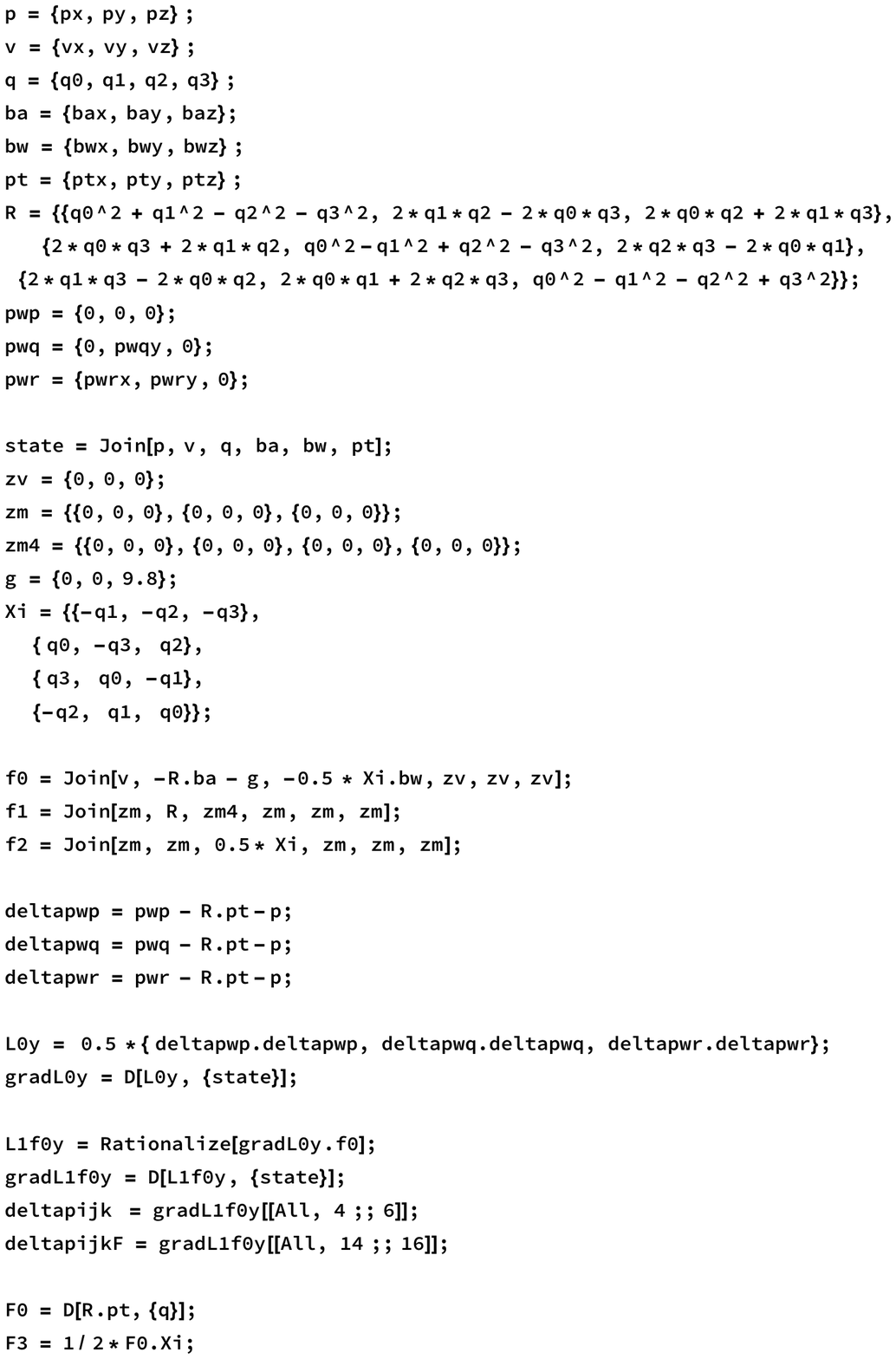}
\includepdf[pages=2, offset=75 -75, pagecommand={\thispagestyle{empty}}, fitpaper=true]{scripts/omega_bias_mat.pdf}


\end{document}